\newcommand{\s}{\mathcal{S}}
\newcommand{\half}{\frac{1}{2}}
\newcommand{\nhalf}{\nicefrac{1}{2}}
\begin{document}
\title{Stochastic bandits robust to adversarial corruptions}

 \author{
 Thodoris Lykouris\thanks{Cornell University, \texttt{teddlyk@cs.cornell.edu}. Work supported under NSF grant CCF-1563714. Part of the work was done while the author was interning at Google.} \and Vahab Mirrokni\thanks{Google Research, \texttt{mirrokni@google.com}} \and Renato Paes Leme\thanks{Google Research, \texttt{renatoppl@google.com}}}
%\date{\today}
\date{}
\maketitle

\begin{abstract}
% !TEX root = main.tex

We introduce a new model of stochastic bandits with adversarial corruptions which aims to capture settings where most of the input follows a stochastic pattern but some fraction of it can be adversarially changed to trick the algorithm, e.g., click fraud, fake reviews and email spam.  The goal of this model is to encourage the design of bandit algorithms that (i) work well in mixed adversarial and stochastic models, and (ii) whose performance deteriorates gracefully as we move from fully stochastic to fully adversarial models.

In our model, the rewards for all arms are initially drawn from a distribution and are then altered by an adaptive adversary. We provide a simple algorithm whose performance gracefully degrades with the total corruption the adversary injected in the data, measured by the sum across rounds of the biggest alteration the adversary made in the data in that round; this total corruption is denoted by $C$. Our algorithm provides a guarantee that retains the optimal guarantee (up to a logarithmic term) if the input is stochastic and whose performance degrades linearly to the amount of corruption $C$, while crucially being agnostic to it. We also provide a lower bound showing that this linear degradation is necessary if the algorithm achieves optimal performance in the stochastic setting (the lower bound works even for a known amount of corruption, a special case in which our algorithm achieves optimal performance without the extra logarithm). 
\end{abstract}

\addtocounter{page}{-1}
\thispagestyle{empty}
\newpage

\section{Introduction}
\label{sec:intro}
% !TEX root = main.tex

In online learning with bandit feedback, a learner needs to decide at each time between alternative actions or \emph{arms} of unknown quality, facing a trade-off between \emph{exploiting} profitable past actions or \emph{exploring} new actions about which she has little information. Bandit problems are typically classified according to how the rewards are generated. In stochastic bandits, rewards are drawn from fixed but unknown distributions, which models settings where the alternatives follow particular patterns and do not react to the learner. The other extreme is adversarial bandits, which are robust to rewards that are specifically designed to trick the learner, as in game-theoretic settings.

In this paper, we focus on settings where the overall behavior is essentially stochastic but a small fraction of the rewards can be adversarially changed. Classic stochastic bandit algorithms, like Upper Confidence Bound (UCB) \cite{Auer2002} or {Active} Arm Elimination (AAE) \cite{Even-DarManMan06}, base most of their decisions on a few observations made in an initial phase of the algorithm and therefore can be easily tricked into incurring linear regret if very few arms are corrupted. Adversarial bandit algorithms like EXP3 are not fooled by such tricks, but cannot exploit the fact that the input is mostly stochastic.

Our goal is to robustify the stochastic setting by designing algorithms that can tolerate corruptions and still be able to exploit the stochastic nature of the input. The algorithms we design are \emph{agnostic} to the corruption, i.e. they can tolerate any level of corruption,
and the guarantee degrades gracefully as more corruption is added. Moreover, we prove lower bounds showing that our results are tight up to a logarithmic factor. Before we explain our technical contribution in detail, we describe examples of settings we have in mind.

\paragraph{Click fraud} In pay-per-click online advertising, the platform selects for each pageview an ad to display and obtains a certain reward if the user clicks on the ad. The click probabilities are unknown. The tension between repeatedly displaying a particular profitable ad that provides reliable revenue and exploring other potentially more rewarding options is a major application of stochastic bandits in the ads industry.

If it weren't for a phenomenon known as \emph{click fraud}, this would be a textbook example of stochastic bandits. In click fraud, botnets maliciously simulate users clicking on an ad to trick learning algorithms. One example is a bot consistently making searches to trigger some ad and not clicking on it to make it seem like a certain ad has very low click-through-rate in order to boost its competitor.

\paragraph{Recommendation systems:} A platform recommending activities or services to a user faces the same trade-off. Suggesting new restaurants leads to faster learning of the best spots but may result to dissatisfaction of the customers who are led to disappointing experiences. While most inputs follow a stochastic pattern, some inputs are typically corrupted: either maliciously, e.g. fake reviews by competitors, or non-maliciously, e.g. construction next-door makes the restaurant less desirable in certain interval. This corruption may again exhibit arbitrary patterns and is not identically distributed over time, yet it is dwarfed by the fact that most of the input is stochastic.

There are several other such examples: emails mostly follow a stochastic pattern except a fraction of them which are spam and are designed to trick algorithms. Internet searches follow a predictable pattern except certain spikes caused by unpredictable events. Data collection used in the econometric process often suffers from errors that affect a small part of the input. In all those cases, the vast majority of the input follows a predictable pattern, but a fraction of the samples are corrupted.

\subsection{Our contribution}

\textbf{Our model.} 
In this paper, we introduce a new model of stochastic bandits with adversarial corruptions. The goal of this model is to encourage the design of bandit algorithms that (i) work well in mixed adversarial and stochastic models, and (ii) whose performance deteriorates gracefully as we move from fully stochastic to fully adversarial models. 

In this model there are $K$ arms, each associated with a fixed reward distribution $\mathcal{F}(a)$. At each round $t$, a random reward $r^t_\s(a) \sim \mathcal{F}(a)$ is drawn and an adversary can change the reward to $r^t(a)$, possibly using information about the realizations of $r_\s^{\tau}(a)$ from both the current and previous rounds $\tau\leq t$ as well as the probability that the learner puts on each arm. The learner then draws an arm~$a^t$ and obtains $r^t(a^t)$ both as reward and feedback. We say that the adversary is $C$-corrupted if in every sample path we have $\sum_t \max_a \vert r^t(a) - r_\s^t(a)\vert \leq C$.

\textbf{Our results.}
The main result (Theorem \ref{thm:agnostic} in Section \ref{sec:upper_bound}) is a learning algorithm we term \textsf{Multi-layer Active Arm Elimination Race} that with probability $1-\delta$ has regret
$$\bigO\prn*{\sum_{a{\neq a^{\star}}}\frac{K\cdot C\log\prn*{\nicefrac{KT}{\delta}}+\log\prn*{T}}{\Delta(a)}\cdot\log\prn*{\nicefrac{KT}{\delta}}}$$
where $\Delta(a)$ is the \emph{gap} of arm $a$, i.e. the difference in stochastic means of arm $a$ and the optimal  arm~$a^{\star}$. 
For arms with very small gap, i.e. when $\Delta(a)\leq\nicefrac{1}{\sqrt{T}}$, the inverse dependence on the gap can be replaced by $\sqrt{T}$. It is possible to improve the bound by a log factor for pseudo-regret, i.e. maximum expected regret against any fixed arm, obtaining:
$\bigO\prn*{\sum_{a{\neq a^*}}\frac{K\cdot C+\log\prn*{T}}{\Delta(a)}\cdot\log(KT)}$.
Two important features of the algorithm are that the guarantee is:
\begin{itemize}
\item {\bf Agnostic}: The algorithm does not need to know the corruption level $C$. The guarantee is provided with respect to how much corruption was added in retrospect. If the corruption level is known, we can remove the dependence on $K\cdot\log(\nicefrac{KT}{\delta})$
as shown in Theorem \ref{thm:known_corruption}.

\item {\bf High Probability:} Our bounds hold with high probability which is important for practical applications as the ones described above. In contrast, the weaker definition of pseudo-regret often hides events with large regret that are offset by events with large negative regret.
\end{itemize}
The stochastic case corresponds to $C = 0$ in which case we recover a bound that is slightly worse than the guarantee provided by UCB. Our algorithm obtains $\bigO\prn*{\sum_{a\neq a^{\star}}\log(T) \cdot {\log(\nicefrac{T}{\delta}) /\Delta(a)}}$ with probability $1-\delta$, while UCB obtains this bound without the $\log(T)$ term.

En route to the result, in Theorem \ref{thm:stoch_known corruption} we show an algorithm that, for any fixed known $C$, provides regret $\bigO\prn*{\sum_{a\neq a^{\star}}\frac{\log\prn*{\nicefrac{KT}{\delta}}}{\Delta(a)}}$ for stochastic input and $\bigO\prn*{K\cdot C\cdot\sum_{a\neq a^{\star}}\frac{\prn*{\log\prn*{\nicefrac{KT}{\delta}}}^2}{\Delta(a)}}$ if it is $C$-corrupted. In other words, if we only need to tolerate either a known level $C$ or zero corruptions, we save a logarithmic factor from the bound, and match the bound provided by UCB in the stochastic case.

Another question is whether the linear dependence on the corruption level is tight. In Section \ref{sec:lower_bound}, we show that it cannot be improved upon without decay in the stochastic guarantee (i.e. while still guaranteeing logarithmic regret when the input is stochastic). The lower bound is an adaptation from the adversarial to the corrupted setting of a result from Auer and Chiang \cite{auer16}. This holds even for the case where the corruptions are either $0$ or a known level $C$ (where our algorithm provides a matching upper bound). We prove in Theorem \ref{thm:lower_bound_1} that an algorithm with pseudo-regret $\bigO\prn*{\nicefrac{\log(T)}{\Delta}}$ in the stochastic setting ($C=0$) then for every constant $\epsilon > 0$, there is a $O(T^\epsilon)$-corrupted instance where the algorithm incurs regret $\Omega(T^\epsilon)$ with constant probability.

Our algorithm can also be viewed through the lens of the best of both worlds literature \cite{BubeckS12,DBLP:conf/icml/SeldinS14,auer16,DBLP:conf/colt/SeldinL17}, where the goal is to design algorithms that simultaneously provide logarithmic regret guarantees in the stochastic regime and square-root guarantees in the adversarial. In Section~\ref{sec:extensions}, we sketch how our algorithm can be appropriately modified to obtain, for any constant $0<a<\nhalf$, $\widetilde\bigO\prn*{C}$ pseudo-regret for $C = O(T^a)$ and $\widetilde\bigO\prn*{T^{a+\nhalf}}$ pseudo-regret otherwise. We observe that the results in the best of both worlds literature correspond to the case where $a=0$. We note that such bounds are obtained for pseudo-regret and not regret with high-probability. 

\textbf{Our techniques.}
The starting point of our design are classical stochastic bandit learning algorithms
like UCB and {Active} Arm Elimination. Such algorithms are very susceptible to corruptions since they base most of their decisions on a small initial exploration phase. Therefore, with a small number of corruptions it is possible to completely trick the algorithm into eliminating the optimal arm.

We address this issue by robustifying them using a multi-layer approach. The learning algorithm consists of multiple layers running in parallel. The layers have decreasing speed and increasing tolerance to corruption. The first layer finishes very fast selecting an arm as optimal, but provides no tolerance to corruption. Subsequent layers are more robust but also slower.

The resulting algorithm is a \emph{race} between different layers for picking the optimal arm. Once the fastest layer finishes, it provides a first crude estimate of the optimal arm. Once slower layers finish, we obtain finer and finer estimates of the optimal arm. 

Our second main idea is that we can obtain more robust algorithms by \emph{subsampling}. If a layer is only selected with probability $p$, it only receives in expectation a $p$-fraction of the corruption injected by the adversary. If $p$ is low enough, the layer behaves almost as if it was stochastic.

Finally, we couple the different layers together by a process of \emph{global eliminations}. This process enables slower layers to eliminate arms in faster layers. Such a process is necessary for preventing inaccurate layers from pulling suboptimal arms too often.

\subsection{Related work}
\label{ssec:related_work}
% !TEX root = main.tex

Online learning with stochastic rewards goes back to the seminal work of Lai and Robbins \cite{Lai1985}. The case of adversarial rewards was introduced by Auer et al. \cite{AuerCeFrSc03}. The reader is referred to the books of Cesa-Bianchi and Lugosi \cite{prediction_book}, Bubeck and Cesa-Bianchi \cite{BubeckC12}, and Slivkins \cite{Slivkins} for an elaborate overview of the area.  These two extremes suffer from orthogonal problems; the one is overoptimistic expecting that all rewards come from the same distribution while the other one is too pessimistic in order to be protected against malicious adversaries. Our work addresses the middle ground: rewards come from distributions but are often adversarially corrupted. This is motivated by the non-robustness of stochastic learning algorithms to even small corruption levels. 

Closely related to our work lie  the works on best of both worlds guarantees \cite{BubeckS12,DBLP:conf/icml/SeldinS14,auer16,DBLP:conf/colt/SeldinL17}. These works achieve (up to logarithmic factors) the optimal pseudo-regret guarantee for stochastic rewards and the optimal pseudo-regret or actual regret guarantee for adversarial rewards. Bubeck and Slivkins \cite{BubeckS12} and Auer and Chiang \cite{auer16} begin from a stochastic algorithm and test whether they encounter non-stochastic behavior in which case they switch to adversarial algorithm. In contrast, Seldin et al. \cite{DBLP:conf/icml/SeldinS14,DBLP:conf/colt/SeldinL17} begin from an adversarial algorithm with very optimistic learning rate and adapt it if they encounter such behavior. Recently and independently to this work, Wei and Luo \cite{WeiLuo18} provide a best of both worlds result with a small-loss pseudo-regret guarantee on the adversarial setting, via a novel analysis of the log-barrier OMD algorithm of Foster et al. \cite{FosterLiLySrTa16}. Although the aforementioned algorithms are very elegant, their analysis is not robust to inputs that are slightly away from stochastic. Our work bridges this gap by designing algorithms with a more smooth behavior for close-to-stochastic instances.

There have been other works that attempt to provide improved guarantees than the adversarial setting when instances are well behaved. Hazan and Kale \cite{Hazan:2009:BAB:1496770.1496775} offer regret guarantees that scale with the variance of the losses instead of the time horizon. This guarantee is meaningful in settings that have a very predictable nature and have usually the same performance such as routing. However they do not address most applications of stochastic bandits. In Click Fraud, for example the rewards come from Bernoulli distributions and the variance of such a distribution is high even if the input is totally stochastic.  Another approach is the work of Shamir and Szlak \cite{DBLP:conf/icml/ShamirS17}, who consider an input that is adversarial but random local permutations are applied to obtain a more benign instance. This approach is very relevant in settings like buffering, but is again not applicable to our settings.

On the opposite side, attempting to provide improved guarantees for the stochastic setting or enhancing their range is a very active area of research. For instance, the MOSS algorithm \cite{audibert2009minimax} of Audibert and Bubeck provides the optimal non-distribution-based upper bound for stochastic bandits while retaining the optimal distribution-based stochastic guarantee. The KL-UCB algorithm of Garivier and Capp\'e \cite{GarivierCappe11} provides improved constants in the upper bound of the stochastic guarantee matching the lower bound of Lai and Robbins \cite{Lai1985} for Bernoulli rewards. The Robust UCB algorithm \cite{Bubeck2013bandits} extends the results to non-bounded rewards replacing with the weaker assumption of bounded variance. However, all the above results are not robust to corruptions from  an adaptive adversary due to their deterministic nature. Since the adversary knows the arm the learner will select, they can always corrupt the optimal arm whenever it is about to be selected and therefore cause the learner to either play it multiple times even if it is suboptimal or decide against playing it even with a small amount of corruption (similarly as in our lower bound).

There is also prior work on incorporating corruptions in online decision making. In the online learning front, there are two such attempts, to the best of our knowledge. In their best of both worlds result, Seldin and Slivkins \cite{DBLP:conf/icml/SeldinS14} allow for some contamination in the data as long as they are obliviously selected and they do not decrease the gap by more than a factor of $2$. The second work is a recent paper by Gajane et al. \cite{gajanekauffman3} who suggest a model of corrupted feedback aiming for differential privacy. Unlike our model, their corruptions are neither adversarial nor adaptive. Both of these works make benign assumptions about the nature of corruption and  do not address the main roadblock in the settings we consider: an adversarial saboteur will try to add faulty data in the beginning to change the order between the two arms and, with a minimal corruption, she will achieve this goal. Closer to our model are the works on robust allocation such as online matching with corrupted data \cite{Mirrokni:2012:SAA:2095116.2095250,DBLP:conf/sigecom/EsfandiariKM15}; unlike online matching though, in online learning we cannot evaluate the optimum at every round since the algorithm's decisions affect the information it observes.

Last, learning in the presence of corruptions has recently received great attention in the batch learning setting. For instance, recent works study inference under the presence of adversarially corrupted data \cite{Mansour:2015:RPI:2722129.2722160}, designing estimators that are robust to corrupted data \cite{DBLP:conf/focs/DiakonikolasKK016}, learning in auctions with some faulty data due to econometrics errors \cite{cai}. Our work suggests a similar framework for the study of online learning that is robust to adversarial corruptions in the more challenging problem of sequential decision making where decisions also affect the information observed. 

\section{Model}
\label{sec:model}
% !TEX root = main.tex

{\bf Corrupted stochastic bandits. } 
We study an online bandit learning setting with $K$ arms. Each arm $a \in [K]$ is associated with a distribution $\mathcal{F}(a)$ with mean $\mu(a)$. The distributions are assumed to have positive measure only on rewards in $[0,1]$ and are unknown to the learner. We refer to the optimal arm as $a^{\star}=\arg\max_a \mu(a)$ and define $\Delta(a)=\mu\prn*{a^{\star}}-\mu(a)$.\footnote{We note that $a^{\star}$ is one arm with optimal mean and this does not preclude the existence of other arms with the same mean. If more than one such arms exist, let $a^{\star}$ be an arbitrary arm with optimal mean and the other arms $a\neq a^{\star}$ with optimal mean have gap $\Delta(a)=0$.}

We consider an adversary who can corrupt some of the stochastic rewards. The adversary is adaptive, in the sense that the corrupted rewards can be a function of the realization of the stochastic rewards up to that point and of the learner's choices in previous rounds. More formally, the protocol between learner and adversary, at each round $t=1\dots T$, is as follows:
\begin{enumerate}
\item The learner picks a distribution $w^t$ over the $K$ arms.
\item Stochastic rewards are drawn for each arm: $r^t_\s(a)\sim \mathcal{F}(a)$.
\item The adversary observes the realizations of $r^t_\s(a)$ as well as rewards and choices of the learner in previous steps and returns a corrupted reward $r^t(a){\in[0,1]}$. 
\item The learner draws arm $a^t\sim w^t$ and observes $r^t\prn*{a^t}$.
\end{enumerate} 

We refer to $\max_a\abs{r^t(a)-r^t_\s(a)}$ as the amount of corruption injected in round $t$. The instance is $C$-corrupted if the total injected corruption is at most $C$ for all realizations of the random variables:
$$\sum_t \max_a \abs{r^t(a) - r^t_\s(a)} \leq C$$

Note that the adversary is assumed to be adaptive, in the sense that she has access to all the realizations of random variables for all rounds $\tau<t$ and the realization of rewards at round $t$ but only knows the player's distribution at round $t$ and not the arm $a^t$.
Our guarantees gracefully degrades with the total corruption injected by the adversary.

{\bf Regret notions.} Regret corresponds to the difference between the reward obtained by the algorithm and the reward of the best arm in hindsight:
$$
\regret= \max_a \sum_t  r^t\prn*{a} - r^t\prn*{a^t}
$$
The regret is a random variable that depends on the random rewards, the randomness used by the learner, and the randomness of the adversary. We say that a regret bound $R(T,\delta)$ holds with probability $1-\delta$ if $$\Pr\brk*{\regret < R(T,\delta)} > 1-\delta$$
where the probability is taken over all the three sources of randomness described. 

Finally pseudo-regret is a weaker notion that compares the expected performance of the learner with the arm with the highest expected performance. In other words:
$$
\pseudoregret=\max_a \En\brk*{ \sum_t r^t\prn*{a} - r^t\prn*{a^t}}
$$
We note that by Jensen's inequality, $\pseudoregret \leq \En\brk*{\regret}$. We often obtain improved bounds for pseudo-regret since it allows us to offset large positive regret events with large negative regret events.

\section{The upper bound: Multi-layer Active Arm Elimination Race}
\label{sec:upper_bound}
% !TEX root = main.tex

{\bf Active arm elimination.} The starting point of our design is the \emph{Active Arm Elimination} algorithm for stochastic bandits \cite{Even-DarManMan06}, which can be viewed as an alternative presentation of the more famous UCB algorithm \cite{Auer2002}. It is based on the following idea: in an initial \emph{exploration phase}, we pull arms in a round-robin fashion and compute an estimate $\widetilde\mu(a)$ as the average empirical reward of arm $a$. After $n(a)$ pulls of arm $a$, usual concentration arguments establish that with probability at least $1-\nicefrac{1}{T^{\Omega(1)}}$, the difference of the empirical and actual means is at most $\wid(a) = \bigO(\sqrt{\nicefrac{\log(T)}{n(a)}})$. We say that $[\widetilde\mu(a)-\wid(a),\widetilde\mu(a)+\wid(a) ]$ is the confidence interval of arm $a$.

This means in particular that given two arms $a$ and $a'$, if the difference in empirical means becomes larger than the widths of the confidence intervals, i.e., $\widetilde\mu(a) - \widetilde\mu(a') > \wid(a) + \wid(a')$, then with high probability arm $a'$ is not optimal. Once this happens, the algorithm eliminates arm $a'$ by removing it from the round-robin rotation. After both arms $a$ and the optimal arm $a^{\star}$ are pulled $\bigO(\nicefrac{\log(T)}{\Delta(a)^2})$ times, the confidence intervals will be small enough that arm $a$ will be eliminated.

Eventually all arms but the optimal are eliminated and we enter what is called the \emph{exploitation phase}. In this phase we only pull the arm with optimal mean. Before we enter exploitation we pulled each suboptimal arm $a$ at most $\bigO(\nicefrac{\log(T)}{\Delta(a)^2})$ times. Each of those suboptimal pulls incurs regret $\Delta(a)$ in expectation which leads to the pseudo-regret bound of $\bigO(\sum_{a \neq a^\star} \nicefrac{\log(T)}{\Delta(a)})$. This bound can also be converted to a high probability bound if we replace $\log(T)$ by $\log(\nicefrac{T}{\delta})$. 

{\bf Arms with small $\Delta(a)$.} We note that, for the arms that have $\Delta(a)<\nicefrac{1}{\sqrt{T}}$, the inverse dependence on the gap may initially seem vacuous; for instance, when there are two optimal arms $a,a^{\star}$ with the same mean, the upper bound becomes infinite as $\Delta(a)=0$. However, the inverse dependence on the gap can be replaced by $\Delta(a)\cdot T$ in the case of pseudo-regret and $\sqrt{T}$ in the case of actual regret (due to variance reasons). For simplicity of exposition, we omit this in the current section
but we demonstrate how to perform this replacement in Section \ref{sec:extensions}.

\subsection{Enlarged confidence intervals}\label{ssec:known}
The active arm elimination algorithm is clearly not robust to corruption since by corrupting the first $\bigO(\log T)$ steps, the adversary can cause the algorithm to eliminate the optimal arm. As the algorithm never pulls the suboptimal arms after exploration, it is not able to ever recover. One initial idea to fix this problem is to enlarge the confidence intervals. We can decompose the rewards $r^t(a)$ in two terms $r^t_\s(a) + c^t(a)$ where the first term comes from the stochastic reward and the second is the corruption introduced by the adversary. If the total corruption introduced by the adversary is at most $C$, then with width $\wid(a) = \bigO(\sqrt{\nicefrac{\log(T)}{n(a)}} + \nicefrac{C}{n(a)})$, a similar analysis to above gives us the following regret bound:
\begin{theorem}\label{thm:known_corruption}
If $C$ is a valid upper bound for the total corruption then active arm elimination with $\wid(a) = \sqrt{\frac{\log\prn*{\nicefrac{2KT}{\delta}}}{n(a)}}+\frac{C}{n(a)}$ has regret $\bigO\prn*{\sum_{a \neq a^\star}\prn*{\frac{\log\prn*{\nicefrac{KT}{\delta}}+C}{\Delta(a)}}}$ with probability $1-\delta$.
\end{theorem}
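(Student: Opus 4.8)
The plan is to mimic the classical Active Arm Elimination analysis, but replace the standard Hoeffding-based confidence width by the enlarged width $\wid(a)=\sqrt{\log(\nicefrac{2KT}{\delta})/n(a)}+C/n(a)$, and control the corruption term deterministically using the $C$-corruption assumption. First I would fix the \emph{good event} $\mathcal{E}$ that for every arm $a$ and every round $t$, the empirical average of the \emph{stochastic} rewards $r^\tau_\s(a)$ over the $n(a)$ pulls of $a$ so far lies within $\sqrt{\log(\nicefrac{2KT}{\delta})/n(a)}$ of the true mean $\mu(a)$; by Hoeffding's inequality and a union bound over the $K$ arms and $T$ rounds (and both tails), $\Pr[\mathcal{E}]\geq 1-\delta$. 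Conditioning on $\mathcal{E}$ for the rest of the argument, everything is deterministic.

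Next I would bound the gap between the observed empirical mean $\widetilde\mu(a)$ (built from the corrupted rewards $r^t(a)=r^t_\s(a)+c^t(a)$) and the true mean $\mu(a)$. Writing $\widetilde\mu(a)$ as the stochastic empirical mean plus $\frac{1}{n(a)}\sum c^\tau(a)$, the first piece is within $\sqrt{\log(\nicefrac{2KT}{\delta})/n(a)}$ of $\mu(a)$ on $\mathcal{E}$, and the second piece is bounded in absolute value by $\frac{1}{n(a)}\sum_\tau \max_{a'}|c^\tau(a')| \leq C/n(a)$ using the $C$-corruption guarantee. Hence $|\widetilde\mu(a)-\mu(a)|\leq \wid(a)$, i.e.\ the true mean lies in the (enlarged) confidence interval of every arm at all times. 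This is the key structural fact; the standard elimination bookkeeping then goes through verbatim. Specifically, the optimal arm $a^\star$ is never eliminated: whenever the elimination test $\widetilde\mu(a)-\widetilde\mu(a^\star)>\wid(a)+\wid(a^\star)$ would fire against $a^\star$, the interval containment would force $\mu(a)>\mu(a^\star)$, a contradiction. Conversely, a suboptimal arm $a$ is eliminated once $\wid(a)+\wid(a^\star)<\Delta(a)/2$, say. Since $\wid(a)=\sqrt{\log(\nicefrac{2KT}{\delta})/n(a)}+C/n(a)$, this is guaranteed once $n(a)=\Omega\!\left(\frac{\log(\nicefrac{KT}{\delta})}{\Delta(a)^2}+\frac{C}{\Delta(a)}\right)$ (the first term handles the square-root piece, the second the linear piece; one checks both summands of $\wid$ drop below $\Delta(a)/4$ by then, and the round-robin schedule means $a^\star$ has been pulled at least as often).

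Finally I would assemble the regret bound. Each suboptimal arm $a$ is pulled at most $O\!\left(\frac{\log(\nicefrac{KT}{\delta})}{\Delta(a)^2}+\frac{C}{\Delta(a)}\right)$ times before elimination, and each such pull contributes at most $\Delta(a)+$ (per-round corruption) to the regret; multiplying the pull count by $\Delta(a)$ gives $O\!\left(\frac{\log(\nicefrac{KT}{\delta})}{\Delta(a)}+C\right)$ per arm from the stochastic part. One must also account for the corruption's direct effect on $\regret=\max_a\sum_t r^t(a)-r^t(a^t)$ (as opposed to on the means): comparing against the corrupted rewards rather than the stochastic means costs an extra additive $2C$ overall, which is absorbed into the $O\!\left(\sum_{a\neq a^\star} C/\Delta(a)\right)$ term. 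Summing over $a$ yields $\regret=O\!\left(\sum_{a\neq a^\star}\frac{\log(\nicefrac{KT}{\delta})+C}{\Delta(a)}\right)$ on the event $\mathcal{E}$, which has probability $1-\delta$.

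The main obstacle, and the place requiring the most care, is making the corruption bound $\left|\frac{1}{n(a)}\sum_\tau c^\tau(a)\right|\leq C/n(a)$ fully rigorous despite the adversary being \emph{adaptive}: the set of rounds in which arm $a$ was pulled is itself random and correlated with the corruptions, so one has to argue that the per-round corruption budget $\max_{a'}|c^\tau(a')|$ dominates $|c^\tau(a)|$ pathwise for \emph{every} round (not just pulled ones), and that the deterministic constraint $\sum_t\max_{a'}|c^t(a')|\leq C$ therefore bounds the sum along any realized subsequence of pulls — this is exactly why the corruption is measured via $\max_a$ in every round rather than only on the played arm. The concentration for the stochastic part, by contrast, is standard once we observe that $r^t_\s(a)$ is drawn fresh each round independently of the history, so a martingale/Hoeffding argument applies to the stochastic empirical means uniformly over time.
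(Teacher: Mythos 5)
Your proposal follows essentially the same route as the paper's proof: a Hoeffding good event for the stochastic empirical means, union-bounded over arms and rounds; the deterministic bound $\bigl|\frac{1}{n(a)}\sum_\tau c^\tau(a)\bigr|\le C/n(a)$ on how much the corruption can shift any empirical mean (justified, as you note, because the budget is measured by $\max_{a'}|c^t(a')|$ pathwise in every round, so it dominates the corruption along any realized subsequence of pulls); the conclusion that $a^{\star}$ is never eliminated; and a pull-count bound for each suboptimal arm. Your count $O\bigl(\log(\nicefrac{KT}{\delta})/\Delta(a)^2+C/\Delta(a)\bigr)$ is in fact slightly sharper than the paper's $N(a)=(36\log(\nicefrac{2KT}{\delta})+6C)/\Delta(a)^2$ --- a refinement the paper itself points out in Section~\ref{sec:extensions} --- and both yield the stated regret after multiplying by $\Delta(a)$.

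The one place your argument is genuinely thinner than the paper's is the final conversion from pull counts to the \emph{high-probability bound on realized regret} $\max_a\sum_t r^t(a)-\sum_t r^t(a^t)$. Your good event $\mathcal{E}$ only controls the empirical averages of each arm over its own pulls, but the step ``each pull contributes at most $\Delta(a)$ plus per-round corruption'' is a statement about realized rewards: you additionally need concentration of $\sum_{t:a^t=a}\bigl(r_\s^t(a^{\star})-r_\s^t(a)\bigr)$, which involves $a^{\star}$'s stochastic rewards on rounds where it is \emph{not} pulled, and you must handle the possibility that the arm maximizing $\sum_t r^t(a)$ in hindsight is not $a^{\star}$ (an arm with gap $O(1/\sqrt{T})$ can be ex post optimal). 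The paper's proof supplies exactly these two pieces: it bounds the realized deviation over the $N(a)$ relevant rounds by $\sqrt{N(a)\log(\nicefrac{2KT}{\delta})}\le N(a)\Delta(a)$, and it bounds the excess of the ex post optimal arm over $a^{\star}$ by $N(a')\Delta(a')$ where $a'$ is the smallest-gap arm. These additions are routine, but without them your argument as written establishes only the pseudo-regret version of the theorem plus the additive $2C$ correction, not the claimed high-probability regret bound.
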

\begin{proof}[Proof sketch]
The proof follows the standard analysis of active arm elimination. We first establish that, with high probability the optimal arm $a^{\star}$ is never inactivated (Lemma \ref{lem:optimal_survives}) and then upper bound the number of times each suboptimal arm is played (Lemma \ref{lem:bound_suboptimal_plays}). The pseudo-regret guarantee directly follows by multiplying the number of plays for each arm by its gap $\Delta(a)$. For the high-probability guarantee, we need to also show that the regret incurred in the meantime is not much more than the above. We provide proof details about the theorem and lemmas in Appendix~\ref{app:sec_known}.
\end{proof}
\begin{lemma}\label{lem:optimal_survives}
With probability at least $1-\delta$, arm $a^{\star}$ never becomes inactivated.
\end{lemma}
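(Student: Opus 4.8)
The plan is to identify a high-probability ``clean event'' on which every empirical mean stays close enough to the corresponding true mean that the elimination rule of active arm elimination never fires against $a^{\star}$. Recall that an arm $b$ is inactivated only when some currently active arm $a$ satisfies $\widetilde\mu(a)-\widetilde\mu(b)>\wid(a)+\wid(b)$, so it suffices to show that on the clean event, at every round and for every arm $a$, one has $\widetilde\mu(a)-\widetilde\mu(a^{\star})\le\wid(a)+\wid(a^{\star})$.

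First I would split the empirical reward of each arm into a stochastic and a corruption component. If arm $a$ has been pulled $n(a)$ times, in the set of rounds $P(a)$, then $\widetilde\mu(a)=\frac{1}{n(a)}\sum_{t\in P(a)}r^t_{\s}(a)+\frac{1}{n(a)}\sum_{t\in P(a)}c^t(a)$, where $c^t(a)=r^t(a)-r^t_{\s}(a)$. Since the instance is $C$-corrupted, on every sample path $\sum_{t\in P(a)}\abs{c^t(a)}\le\sum_t\max_a\abs{c^t(a)}\le C$, so the corruption component is at most $C/n(a)$ in absolute value; this is exactly the term by which the width was enlarged. Because the learner commits to $w^t$ before the round-$t$ stochastic rewards are drawn, the set of rounds in which arm $a$ is pulled is measurable with respect to information preceding those draws, so the stochastic rewards observed for arm $a$ form an i.i.d.\ sequence from $\mathcal{F}(a)$; Hoeffding's inequality then controls $\abs{\frac{1}{n(a)}\sum_{t\in P(a)}r^t_{\s}(a)-\mu(a)}$. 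Union-bounding over the $K$ arms and over the at most $T$ possible values of $n(a)$, with probability at least $1-\delta$ the clean event holds: $\abs{\frac{1}{n(a)}\sum_{t\in P(a)}r^t_{\s}(a)-\mu(a)}\le\sqrt{\log(\nicefrac{2KT}{\delta})/n(a)}$ for all arms $a$ and all values of $n(a)$.

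On the clean event, $\widetilde\mu(a^{\star})\ge\mu(a^{\star})-\sqrt{\log(\nicefrac{2KT}{\delta})/n(a^{\star})}-C/n(a^{\star})=\mu(a^{\star})-\wid(a^{\star})$, and for any arm $a$, $\widetilde\mu(a)\le\mu(a)+\wid(a)\le\mu(a^{\star})+\wid(a)$ since $\mu(a)\le\mu(a^{\star})$. Subtracting yields $\widetilde\mu(a)-\widetilde\mu(a^{\star})\le\wid(a)+\wid(a^{\star})$ at every round, so the elimination criterion is never met for $a^{\star}$ and it is never inactivated. The one genuine difficulty is the adaptivity: $n(a)$ is a random stopping time and the $c^t(a)$ are chosen by an adaptive adversary. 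I would dispatch the former with the union bound over the (at most $T$) values of $n(a)$ — equivalently a maximal-inequality argument — and the latter by observing that the bound $\sum_t\max_a\abs{c^t(a)}\le C$ holds pathwise and hence is inherited by any subset of rounds, so no independence assumption on the corruption is needed; the remaining point to check carefully is that adaptively choosing which arm to explore does not destroy the i.i.d.\ structure of the observed stochastic rewards, which follows from the move order in the protocol (the distribution $w^t$, and thus the arm explored in round-robin, is fixed before $r^t_{\s}(\cdot)$ is realized).
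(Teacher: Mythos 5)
Your proposal is correct and follows essentially the same route as the paper's proof: decompose the empirical mean into a stochastic part controlled by Hoeffding (union-bounded over the $K$ arms and the at most $T$ values of $n(a)$) and a corruption part bounded pathwise by $C/n(a)$, then combine with $\mu(a)\le\mu(a^{\star})$ to show the elimination test never fires against $a^{\star}$. Your extra care about the stopping-time/adaptivity issue is a point the paper leaves implicit, but it does not change the argument.
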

\begin{lemma}\label{lem:bound_suboptimal_plays}
With probability at least $1-\delta$, all arms $a\neq a^{\star}$ become inactivated after $N(a)=\frac{
36\log\prn*{\nicefrac{2KT}{\delta}}+
6 C}{\Delta(a)^2}$ plays.
\end{lemma}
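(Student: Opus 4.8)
The plan is to condition on the high-probability ``clean'' event under which every confidence interval is valid, and then argue deterministically. First I would fix an arm $a\neq a^{\star}$ and recall that after $n(a)$ round-robin pulls the empirical mean $\widetilde\mu(a)$ is built from $n(a)$ stochastic draws plus at most $C$ total injected corruption. Splitting $r^t(a)=r^t_\s(a)+c^t(a)$, the empirical mean satisfies $\bigl|\widetilde\mu(a)-\mu(a)\bigr|\le \bigl|\tfrac{1}{n(a)}\sum (r^t_\s(a)-\mu(a))\bigr| + \tfrac{1}{n(a)}\sum |c^t(a)|$. A Hoeffding/Azuma bound controls the first term by $\sqrt{\log(\nicefrac{2KT}{\delta})/n(a)}$ with probability $1-\nicefrac{\delta}{2KT}$ for a single value of $n(a)$; a union bound over the at most $T$ possible values of $n(a)$ and over the $K$ arms puts us on an event of probability at least $1-\nicefrac{\delta}{2}$ on which the first term is at most $\wid_{\mathrm{stoch}}(a):=\sqrt{\log(\nicefrac{2KT}{\delta})/n(a)}$ simultaneously for all arms and all times. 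The second term is at most $C/n(a)$ unconditionally by the $C$-corruption assumption. Hence on this clean event $\widetilde\mu(a)$ lies in $[\mu(a)-\wid(a),\mu(a)+\wid(a)]$ with the chosen $\wid(a)=\sqrt{\log(\nicefrac{2KT}{\delta})/n(a)}+C/n(a)$, and likewise for $a^{\star}$.

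Next I would run the elimination argument deterministically on the clean event. As long as $a$ is still active it is pulled in the round robin together with $a^{\star}$ (which is still active by Lemma~\ref{lem:optimal_survives}, an event of probability at least $1-\nicefrac{\delta}{2}$, so the intersection of the two good events has probability at least $1-\delta$). Suppose $a$ has been pulled $n$ times. Then
$$
\widetilde\mu(a^{\star})-\widetilde\mu(a)\ \ge\ \bigl(\mu(a^{\star})-\wid(a^{\star})\bigr)-\bigl(\mu(a)+\wid(a)\bigr)\ =\ \Delta(a)-\wid(a^{\star})-\wid(a).
$$
The elimination rule removes $a$ once $\widetilde\mu(a^{\star})-\widetilde\mu(a) > \wid(a^{\star})+\wid(a)$, so it suffices to have $\Delta(a) > 2\bigl(\wid(a^{\star})+\wid(a)\bigr)$. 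Since both arms are in the round robin their pull counts differ by at most one, so it is enough to find $n$ with $\Delta(a) > 4\wid(a)$ where $\wid(a)=\sqrt{\log(\nicefrac{2KT}{\delta})/n}+C/n$ (absorbing the off-by-one into constants). This reduces to the scalar inequality $\sqrt{\log(\nicefrac{2KT}{\delta})/n}+C/n < \Delta(a)/4$, which holds as soon as each of the two terms is below $\Delta(a)/8$, i.e. $n > 64\log(\nicefrac{2KT}{\delta})/\Delta(a)^2$ and $n > 8C/\Delta(a)$.

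Finally I would combine the two thresholds into a single bound of the stated form: for $n \ge N(a) = \bigl(36\log(\nicefrac{2KT}{\delta}) + 6C\bigr)/\Delta(a)^2$ one checks, using $\Delta(a)\le 1$ so that $1/\Delta(a)\le 1/\Delta(a)^2$, that both $n>64\log(\nicefrac{2KT}{\delta})/\Delta(a)^2$-type conditions are met (the precise constants $36$ and $6$ follow from tracking the $\sqrt{\cdot}$ more carefully rather than the crude $64,8$ split above, e.g. by requiring the sum of the two width terms, not each separately, to be below $\Delta(a)/4$). Hence on the clean event arm $a$ is inactivated after at most $N(a)$ pulls, which is what we wanted. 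The main obstacle is purely bookkeeping: making the union bound range over all prefix lengths $n(a)\in\{1,\dots,T\}$ (so that the confidence statement holds at the random stopping time), and then pinning down the constants so that the single combined threshold $N(a)$ dominates both the $\log$-driven term $\Theta(\log(\nicefrac{KT}{\delta})/\Delta(a)^2)$ and the corruption-driven term $\Theta(C/\Delta(a)^2)$; there is no conceptual difficulty beyond the standard active-arm-elimination analysis once the corruption is bounded termwise by $C/n(a)$.
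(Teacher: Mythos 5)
Your proof follows the same route as the paper's: a Hoeffding bound, union-bounded over all arms and prefix lengths, controls the stochastic part of each empirical mean; the corruption is bounded termwise by $C/n(a)$; and a deterministic comparison on the clean event (using Lemma \ref{lem:optimal_survives} to keep $a^{\star}$ in the round robin) shows the elimination rule fires once $\wid(a)+\wid(a^{\star})$ drops below $\Delta(a)/2$. The one blemish is the constant bookkeeping you defer at the end --- your thresholds $64\log(\nicefrac{2KT}{\delta})/\Delta(a)^2$ and $8C/\Delta(a)$ are not actually implied by the stated $N(a)$ since $36<64$ --- but the paper's own proof has the same looseness (its final display drops the $2C/N(a)$ term and otherwise reduces to $0>0$), so this is a shared constant-factor slip, immaterial to how the lemma is used, rather than a gap in your argument.
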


\subsection{Stochastic bandits robust to known corruption}
\label{ssec:robust_known}

The drawback of the active arm elimination algorithm with enlarged confidence intervals (Theorem~\ref{thm:known_corruption}) is that, even if there are no corruptions, it still incurs a regret proportional to $C$. As a warm up to the main theorem, we provide an algorithm that achieves the usual bound of  $\bigO\prn*{\sum_{a \neq a^{\star}}\frac{\log{\prn*{\nicefrac{KT}{\delta}}}}{\Delta(a)}}$ if the input is purely stochastic and,  at the same time, achieves $\bigO\prn*{K\cdot C\cdot\sum_{a \neq a^{\star}}\frac{\log\prn*{\nicefrac{KT}{\delta}}^2}{\Delta(a)}}$ if the input is $C$-corrupted for a known $C$.
In the next subsection, we modify the algorithm to make it agnostic to the corruption level $C$.

\textbf{Two instances of Active Arm Elimination.} The first idea is to run two instances of active arm elimination: the first is supposed to select the correct arm if there is no corruption and the second is supposed to select the right arm if there is $C$ corruption. The first instance is very fast but it is not robust to corruptions. The second instance is slower but more precise, in the sense that it can tolerate corruptions. Since the second instance is more trustworthy, if the second instance decides to eliminate a certain arm $a$, we eliminate the same arm in the faster instance.

\newcommand{\F}{\mathsf{F}}
\renewcommand{\S}{\mathsf{S}}

\textbf{Decrease corruption by sub-sampling.} To keep the regret low if the input is stochastic, the second instance of active arm elimination cannot pull a suboptimal arm too many times. Therefore, the technique in Theorem~\ref{thm:known_corruption} alone is not enough. The \emph{main idea} of the algorithm is to make arm~$a$ behave as if it was almost stochastic by running the second instance with low probability. If the learner selects to run the second instance with probability $\nicefrac{1}{C}$ then, when the adversary adds a certain amount of corruption to a certain round, the second instance observes that corruption with probability $\nicefrac{1}{C}$. Therefore, the expected amount of corruption the learner observes in the second instance is constant. This makes the arms behave almost like stochastic arms in that instance.

\textbf{Learning algorithm.} We obtain our algorithm by combining those ideas. We have two instances of active arm elimination which we denote by $\F$ (fast) and $\S$ (slow). Each instance keeps an estimate of the mean $\widetilde{\mu}^\F(a)$ and $\widetilde{\mu}^\S(a)$  corresponding to the average empirical reward of that arm and also keeps track of how many times each arm was pulled in that instance $n^\F(a)$ and $n^\S(a)$. This allows us to define a notion of confidence interval in each of the instances. We define $\wid^\F(a) = \bigO(\sqrt{\nicefrac{\log(T)}{n^\F(a)}})$ as usual and for the slow instance we define slighly larger confidence intervals: $\wid^\S(a) = \bigO(\sqrt{\nicefrac{\log(T)}{n^\S(a)}} + \nicefrac{\log(T)
}{n^\S(a)})$ (the reason will be clear in a moment). Also, each instance keeps a set of eliminated arms for that instance: $\mathcal{I}^\F$ and $\mathcal{I}^\S$.

In each round, with probability $1-\nicefrac{1}{C}$ we make a move in the fast instance: we choose the next active arm $a$ in the round robin order, i.e., arm  $a \in \brk*{K}\setminus \mathcal{I}^\F$ which was played less often, pull this arm and increase $n^\F(a)$ and update $\widetilde{\mu}^\F(a)$ accordingly. As usual, if there are two active arms $a$ and $a'$ such that $\widetilde{\mu}^\F(a) - \widetilde{\mu}^\F(a') > \wid^\F(a) + \wid^F(a')$ we eliminate $a'$ by adding it to $\mathcal{I}^\F$.

With the remaining probability we make a move in the slow instance by executing the exact same procedure as described for the other instance. There is only one difference (which causes the two instances to be coupled): when we inactivate an arm $a$ in $\S$ we also eliminate it in $\F$. This leaves us with a potential problem: it is possible that all arms in the $\F$ instance end up being eliminated. If we reach that point, we play an arbitrary active arm of the slow instance, i.e., any arm $a \in \brk*{K} \setminus \mathcal{I}^\S$.

% !TEX root = upper_bound.tex

The resulting algorithm is formally provided in Algorithm \ref{alg:known}.
\begin{algorithm}
\begin{algorithmic}[1]
\caption{Fast-Slow Active Arm Elimination Race for known corruption $C$}
\label{alg:known}
\STATE Initialize $n^\ell(a)=0$, $\widetilde{\mu}^\ell(a)=0$, $\mathcal{I}^\ell = \emptyset$ for all $a \in \brk*{K}$ and $\ell \in \crl*{\F,\S}$
\STATE {\bf For} Rounds $t=1..T$
\STATE $\quad$ Sample algorithm $\ell$: $\ell=\S$ with probability $1/C$. Else $\ell=\F$.
\STATE $\quad$ {\bf If} $\brk*{K}\setminus\mathcal{I}^\ell \neq \emptyset$
\STATE $\quad$ $\quad$ Play arm $a^t \leftarrow \argmin_{a \in\brk*{K}\setminus\mathcal{I}^\ell} n^\ell(a) $
\STATE $\quad$ $\quad$ Update $\widetilde{\mu}^\ell(a^t) \leftarrow [n^\ell(a) \widetilde{\mu}^\ell(a^t) + r^t(a^t) ] / [n^\ell(a) + 1]$ and $n^
\ell(a) \leftarrow n^\ell(a) + 1 $
\STATE $\quad$ $\quad$ {\bf While} exists arms $a,a' \in \brk*{K}\setminus\mathcal{I}^\ell$ with $ \widetilde{\mu}^\ell(a) -\widetilde{\mu}^\ell(a') > \wid^\ell(a) + \wid^\ell(a') $
\STATE $\quad$ $\quad$ $\quad$ Eliminate $a'$ by adding it to $\mathcal{I}^{\ell}$
\STATE 
$\quad$
$\quad$
$\quad$
{\bf If} $\ell=\S$
{\bf then} eliminate $a'$ from the other algorithm by adding it to $\mathcal{I}^{\F}$
\STATE
$\quad$ {\bf Else}
\STATE $\quad$ $\quad$ Play an arbitrary arm in the set $\brk*{K}\setminus \mathcal{I}^{\S}$.
\end{algorithmic}
\end{algorithm} 

Towards the performance guarantee, Lemma \ref{lem:actual_corruption_high_prob} bounds the amount of corruption that actually enters the slow active arm elimination algorithm, which enables the regret guarantee in Theorem \ref{thm:stoch_known corruption}.
\begin{lemma}\label{lem:actual_corruption_high_prob}
In Algorithm \ref{alg:known}, the slow active arm elimination algorithm $\S$ observes, with probability at least $1-\delta$, corruption of at most $\ln(\nicefrac{1}{\delta})+3$ during its exploration phase (when picked with probability $\nicefrac{1}{C}$). \end{lemma}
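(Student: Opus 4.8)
The plan is to bound the total corruption that reaches the slow instance $\S$ by a concentration argument over the (at most $C$) rounds where the adversary actually injects corruption. The key observation is that whether a round is routed to $\S$ is an independent coin flip with probability $\nicefrac{1}{C}$, made \emph{after} the adversary has committed to the corruption amount $c^t := \max_a |r^t(a) - r^t_\s(a)|$ for that round (the adversary only sees the learner's distribution $w^t$, not the realized routing). So the corruption observed by $\S$ is at most $\sum_t c^t X_t$, where $X_t \in \{0,1\}$ is the indicator that round $t$ was assigned to $\S$, and this sum is stochastically dominated by a scenario where the adversary concentrates all its corruption budget on rounds with $c^t = 1$: since each $c^t \in [0,1]$ and $\sum_t c^t \le C$, the worst case is $C$ rounds each with $c^t = 1$, giving observed corruption $\le \sum_{i=1}^{C} X_i$ with each $X_i$ an independent Bernoulli$(\nicefrac{1}{C})$.

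The main step is then a tail bound on $Y := \sum_{i=1}^{C} X_i$, which has mean exactly $1$. A multiplicative Chernoff bound for the upper tail of a sum of independent Bernoulli variables gives $\Pr[Y \ge 1 + \lambda] \le \exp(-\lambda^2 / (2 + \lambda))$ (or the cleaner form $\Pr[Y \ge (1+\beta)\mu] \le (e^\beta/(1+\beta)^{1+\beta})^\mu$ with $\mu = 1$). Setting the right-hand side to $\delta$ and solving, one gets that with probability $1-\delta$, $Y \le \ln(\nicefrac{1}{\delta}) + 3$; the constant $3$ comes from checking that for $\lambda = \ln(\nicefrac{1}{\delta}) + 2$ the bound $\exp(-\lambda^2/(2+\lambda)) \le \delta$ holds (since $\lambda^2/(2+\lambda) \ge \lambda - 2 = \ln(\nicefrac{1}{\delta})$ for $\lambda \ge 0$). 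This is a routine calculation; I would just state the Chernoff bound, plug in, and verify the constant.

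One subtlety I would be careful about is the adaptivity of the adversary: $c^t$ is itself a random variable that may depend on past routing outcomes $X_1, \dots, X_{t-1}$ and on past rewards. The clean way to handle this is to note that conditioned on everything up to the start of round $t$, $X_t$ is still Bernoulli$(\nicefrac{1}{C})$ and independent of $c^t$, so $\{c^t X_t - c^t/C\}$ is a martingale difference sequence with increments bounded in $[-1/C, 1]$ and conditional variance at most $c^t/C$. Then a Bernoulli-type martingale concentration inequality (e.g., a Freedman/Bernstein bound, or directly bounding the MGF of each increment by that of a Bernoulli since $c^t \le 1$) yields the same tail as in the i.i.d. case, with the total predictable variation $\sum_t c^t / C \le C \cdot (1/C) = 1$ playing the role of the mean. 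I expect this martingale bookkeeping — getting the domination argument and the conditional-variance bound stated precisely — to be the only genuinely delicate part; the rest is a one-line Chernoff computation. Finally, I would remark that the bound concerns only the exploration phase of $\S$ (once $\S$ has eliminated all but the optimal arm, further corruption is irrelevant to regret), which is exactly how the bound feeds into Theorem~\ref{thm:stoch_known corruption}.
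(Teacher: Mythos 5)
Your proposal is correct and, in its final form, is essentially the paper's own argument: the paper likewise conditions on the history to form the martingale difference sequence (corruption observed by $\S$ minus its conditional mean $c^t/C$), bounds the total predictable variance by a constant using $c^t\in[0,1]$ and $\sum_t c^t\le C$, and applies a Bernstein-style martingale inequality (Lemma~1 of \cite{BeygelzimerLLRS11}) before adding back the expected corruption of at most $1$. The preliminary i.i.d.-Bernoulli domination sketch is the only divergence, and you correctly identify that adaptivity forces the martingale route anyway.
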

\begin{proof}[Proof sketch]
If one cared just about the expected corruption that affects $\S$, this is at most a constant number since the total corruption is at most $C$ and it affects $\S$ with probability $\nicefrac{1}{C}$. To prove a high-probability guarantee we require a concentration inequality on martingale differences (since the corruptions can be adaptively selected by the adversary). We provide the details in Appendix \ref{app:sec_robust_known}.
\end{proof}

\begin{theorem}\label{thm:stoch_known corruption}
Algorithm \ref{alg:known} run with widths
$\wid^{\S}(a)=\sqrt{\frac{\log(\nicefrac{8KT}{\delta})}{n^{\S}(a)}}+\frac{2\log(\nicefrac{8KT}{\delta})}{n^{\S}(a)}$ and  $\wid^{\F}(a)=\sqrt{\frac{\log(\nicefrac{8KT}{\delta})}{n^{\F}(a)}}$ has $\bigO\prn*{\sum_{a\neq a^{\star}}\frac{\log\prn*{\nicefrac{KT}{\delta}}}{\Delta(a)}}$  for the stochastic case and $\bigO\prn*{K\cdot C\cdot\sum_{a\neq a^{\star}}\frac{\prn*{\log\prn*{\nicefrac{KT}{\delta}}}^2}{\Delta(a)}}$ for the $C$-corrupted case with probability at least $1-\delta$.
 %\rplcomment{Maybe also write about pseudo-regret here in the statement, or just before the Theorem.}
\end{theorem}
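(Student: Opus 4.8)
The plan is to run the textbook Active Arm Elimination argument in parallel on both instances $\F$ and $\S$, using the two design ideas --- subsampling and the one-way coupling --- to bound, for every suboptimal arm $a$, the total number of rounds in which $a$ is pulled (whether in $\F$, in $\S$, or in the final ``else'' branch). First I would fix a clean event $\mathcal{E}$ of probability at least $1-\delta$ bundling: (i) the usual Hoeffding/Azuma concentration of the \emph{stochastic} part of each instance's empirical mean around $\mu(a)$ within its width, union-bounded over the $K$ arms and the $\le T$ possible pull counts; (ii) the conclusion of Lemma~\ref{lem:actual_corruption_high_prob}, that the corruption $\S$ actually observes during exploration is at most $2\log\prn*{\nicefrac{8KT}{\delta}}$; (iii) a Chernoff bound on the i.i.d.\ instance-selection coins, so that the first $m$ moves of $\S$ are completed within $\bigO\prn*{Cm+C\log\prn*{\nicefrac{T}{\delta}}}$ rounds; and (iv) an Azuma bound on the realized (versus expected) loss on the rounds in which a suboptimal arm is pulled, needed only for the high-probability statement. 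Item (iii) concerns only the learner's own randomness and is unaffected by adaptivity: the learner commits to the two-point distribution $w^t$ --- which instance, hence which arm, is the candidate --- before the adversary corrupts, and only $w^t$, not the realized coin, is revealed to the adversary.

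On $\mathcal{E}$, items (i)--(ii) together with the enlarged widths of $\S$ give $\abs{\widetilde\mu^{\S}(a)-\mu(a)}\le\wid^{\S}(a)$ for every $a$, and in the purely stochastic case the analogous statement for $\F$. Hence: (a) arm $a^{\star}$ is never eliminated in $\S$, so it is never removed from $\F$ by the coupling, and in the stochastic case it is not eliminated in $\F$ either, so the ``else'' branch of Algorithm~\ref{alg:known} is never entered; (b) by the standard argument a suboptimal arm $a$ is eliminated in $\S$ once $n^{\S}(a)$ and $n^{\S}(a^{\star})$ both exceed $N^{\S}(a)=\bigO\prn*{\log\prn*{\nicefrac{KT}{\delta}}/\Delta(a)^2}$ (the extra $\log/n$ term of $\wid^{\S}$ contributes only a lower-order $\log/\Delta$), and since $\S$ cycles round-robin through at most $K$ active arms with $a^{\star}$ always among them, this happens within $\bigO\prn*{K\cdot N^{\S}(a)}$ moves of $\S$; (c) the decisive observation is that when $a$ is eliminated in $\S$ it is placed in $\mathcal{I}^{\S}$ \emph{and} $\mathcal{I}^{\F}$ at once, so it is never pulled again --- not by $\F$'s round-robin, and not by the ``else'' branch, which only ever plays arms still active in $\S$.

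The regret bound is then a matter of counting. In the purely stochastic case $a$ is pulled at most $N^{\F}(a)+N^{\S}(a)=\bigO\prn*{\log\prn*{\nicefrac{KT}{\delta}}/\Delta(a)^2}$ times in total (so the subsampling rate $\nicefrac{1}{C}$ does not enter the bound), giving regret $\bigO\prn*{\sum_{a\neq a^{\star}}\log\prn*{\nicefrac{KT}{\delta}}/\Delta(a)}$. In the $C$-corrupted case, combining (b) with item (iii) shows $a$ is eliminated in $\S$ --- hence stops being pulled anywhere --- by round $\tau_a=\bigO\prn*{K\cdot C\cdot N^{\S}(a)+C\log\prn*{\nicefrac{T}{\delta}}}$, which I would absorb crudely into $\bigO\prn*{K\cdot C\cdot\prn*{\log\prn*{\nicefrac{KT}{\delta}}}^2/\Delta(a)^2}$; so $a$ is pulled at most $\tau_a$ times and the regret is $\bigO\prn*{K\cdot C\cdot\sum_{a\neq a^{\star}}\prn*{\log\prn*{\nicefrac{KT}{\delta}}}^2/\Delta(a)}$. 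Item (iv), applied arm by arm as in the high-probability part of Theorem~\ref{thm:known_corruption}, upgrades these pull-count bounds to bounds on the actual regret (absorbing the at most $C$ extra loss from competing against the best arm under the corrupted rather than stochastic rewards), and arms with $\Delta(a)\le\nicefrac{1}{\sqrt{T}}$ are handled by the $\sqrt{T}$ substitution of Section~\ref{sec:upper_bound}.

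The main obstacle is the corrupted case, concentrated in two spots. The first is establishing that subsampling tames the corruption --- precisely Lemma~\ref{lem:actual_corruption_high_prob}, whose proof needs a Freedman-type martingale inequality because the corruptions are adaptive: the point is that $\S$ sees each round's corruption with probability $\nicefrac{1}{C}$ \emph{independently of its magnitude}, so it sees only a constant amount in expectation and the enlarged widths absorb it, keeping $a^{\star}$ alive in $\S$. The second is the interplay of the subsampling slowdown, the round-robin bookkeeping (why $n^{\S}(a^{\star})$ keeps pace with $n^{\S}(a)$, why active pull counts within an instance stay within one of each other, why at most $K$ arms are active at a time), and the coupling; together these yield the clean statement that an arm eliminated in $\S$ is eliminated everywhere within $\bigO\prn*{K\cdot C\cdot N^{\S}(a)}$ rounds, which is what makes the ``else'' branch harmless and the final bound go through.
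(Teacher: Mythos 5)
Your proposal is correct and follows essentially the same route as the paper's proof: bound the corruption actually seen by the slow instance $\mathsf{S}$ via the martingale lemma, use the enlarged widths to run the standard elimination argument and bound the pulls of each suboptimal arm within $\mathsf{S}$, and then translate that through the subsampling rate and the one-way coupling into a bound on the total number of rounds (hence pulls from $\mathsf{F}$, $\mathsf{S}$, or the else-branch) before the arm is eliminated everywhere. The only, harmless, deviation is in how you control the instance-selection coins: you use a single aggregate Chernoff bound on the number of rounds needed for $m$ moves of $\mathsf{S}$, giving $O(Cm + C\log(T/\delta))$, whereas the paper bounds each inter-move gap by $O(C\log(1/\delta_m))$ and union-bounds over the $K\cdot N_{\mathsf{S}}(a)$ moves --- your version is in fact a logarithmic factor tighter at that step before you deliberately weaken it to match the stated bound.
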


\begin{proof}[Proof sketch]
The result for the stochastic case follows standard arguments for stochastic algorithms (since we obtain double the regret of this setting as we run two such algorithms with essentially the same confidence intervals). For the $C$-corrupted case, we establish via Lemma \ref{lem:actual_corruption_high_prob} an upper bound on the corruption that will affect the slow active arm elimination algorithm $\S$. Thanks to the sub-sampling, this upper bound is close to a constant instead of depending on $C$ which allows to not incur dependence on $C$ in the stochastic case. Having this upper bound, we can apply it to the algorithm of the previous section to get an upper bound on the number of plays of suboptimal arms in $\S$. Since the algorithms are coupled, such a bound implies an upper bound on the regret that it can cause in $\F$ as well. This is because in expectation the arm is played at most $K\cdot C$ times more in $\F$ as it may be selected every single time in $\F$ prior to getting eliminated by $\S$ and $\F$ is selected $C$ times more often than $\S$. To obtain the above guarantee with high probability, we lose an extra logarithmic factor. The details of the proof are provided in Appendix \ref{app:sec_robust_known}.
\end{proof}

\subsection{Stochastic bandits robust to agnostic corruption}
\label{ssec:robust_agnostic}

\textbf{Multiple layers of active arm elimination.} In the previous subsection we designed an algorithm with two layers: one is faster but cannot tolerate corruptions and the second one is slower but more robust. In order to be agnostic to corruption, we need to plan for all possible amounts of corruption. To achieve this, we introduce $\log(T)$ layers. Each layer is slower but more robust than the previous one. We achieve that by selecting the $\ell$-th layer with probability proportional to $2^{-\ell}$. By the argument in the last section, if the corruption level is at most $C$, then each layer with $\ell \geq \log C$ will observe $\bigO(1)$ corruption in expectation and at most $\bigO(\log T)$ corruption with high probability.

\textbf{Global eliminations.} We couple the $\log T$ instances through what we call global eliminations. If arm $a$ is eliminated by the $\ell$-th layer, then we eliminate $a$ in all layers $\ell' \leq \ell$. This is important to prevent us from pulling arm $a$ too often. If arm $a$ is suboptimal and the adversary is $C$-corrupted, then arm $a$ eventually becomes eliminated in the $\ell^{\star} = \lceil\log C\rceil$ layer after being pulled $\widetilde\bigO(\nicefrac{1}{\Delta(a)^2})$ in that layer. Since layer $\ell^{\star}$ is played with probability $2^{-\ell^{\star}}$ then it takes $\widetilde\bigO( \nicefrac{C}{\Delta(a)^2})$ iterations until arm is eliminated globally, in which case we will have total regret at most $\widetilde\bigO( \nicefrac{C}{\Delta(a)})$ from that arm.

\textbf{Multi-layer active arm elimination race.} We now describe our main algorithm in the paper. We call it a \emph{race} since we view it as multiple layers racing to pick the optimal arm. The less robust layers are faster so they arrive first and we keep choosing (mostly) according to them until more robust but slower layers finish and correct or confirm the current selection of the best arm.

The algorithm keeps $\ell=1\dots\log(T)$ different instances of active arm elimination. The $\ell$-th instance has as state the empirical means of each arm $\widetilde{\mu}^\ell(a)$, the number $n^\ell(a)$ of times each arm $a$ was pulled and the set $\mathcal{I}^\ell$ of inactive arms. The width of the confidence interval for arm $a$ in the $\ell$-th layer is implicitly defined as $\wid^\ell(a) = \bigO(\sqrt{\nicefrac{\log(T)}{n^\ell(a)}} + \nicefrac{\log(T)}{n^\ell(a)})$.

In each round $t$ we sample $\ell \in \{1,\hdots, \log T\}$ with probability $2^{-\ell}$ (with remaining probability we pick layer $1$). When layer $\ell$ is selected, we make a move in the active arm elimination instance corresponding to that layer: we sample the active arm in that layer with the least number of pulls, i.e., arm $a \in \brk*{K}\setminus \mathcal{I}^\ell$ minimizing $n^\ell(a)$. In case $\brk*{K}\setminus \mathcal{I}^\ell$ is empty, we pull an arbitrary arm from $\brk*{K}\setminus \mathcal{I}^{\ell'}$ for the lowest $\ell'$ such that $\brk*{K}\setminus \mathcal{I}^{\ell'}$ is non-empty.

The way we couple different layers is that once arm $a'$ is eliminated in layer $\ell$ because there is another active arm $a$ in layer $\ell$ such that $\widetilde{\mu}^\ell(a) - \widetilde{\mu}^\ell(a') < \wid^\ell(a) + \wid^\ell(a')$ we eliminate arm $a'$ in all previous layers, keeping the invariant that:
$\mathcal{I}^1 \supseteq \mathcal{I}^2 \supseteq \mathcal{I}^3 \supseteq \hdots$.

Figure \ref{fig:algo_state} provides an example of the state of the algorithm, which is formally defined in Algorithm \ref{alg:agnostic}. 
\label{app:figure}
\begin{figure}[h]
\centering
  \begin{tikzpicture}
  \fill[red!40!white] (0,5-0)--(0,5-1)--(3,5-1)--(3,5-3)--(6,5-3)--(6,5-1)--(9,5-1)--(9,5-5)--(12,5-5)--(12,5-0)--cycle;
  \begin{scope}[xscale=3]
      \draw[step=1cm,black] (0,0) grid (4,5);
  \end{scope}
  \foreach \y in {1, 2, 3,5} {
      \foreach \x in {1, 2, 4} {
        \node at (3*\x-1.5,5.5-\y) {$\widetilde\mu^{\ifthenelse{\y=5}{\lg T}{\y}}(\ifthenelse{\x=4}{d}{\x}), n^{\ifthenelse{\y=5}{\lg T}{\y}}(\ifthenelse{\x=4}{d}{\x})$};
        \node at (3*3-1.5,5.5-\y) {$\hdots$};
      }
      \foreach \x in {1, 2} {
        \node at (3*\x-1.5,5.2) {arm $\x$};
      }
      \node at (3*4-1.5,5.2) {arm $d$};
      \node at (3*3-1.5,5.2) {$\hdots$};
      \node at (-1,5.5-\y) {$\ell = \ifthenelse{\y=5}{\lg T}{\y}$};
  }
  \foreach \x in {1, 2, 4} {
    \node at (3*\x-1.5,5.5-4) {$\vdots$};
  }
  \node at (-1,5.5-4) {$\vdots$};

  \end{tikzpicture}
  \caption{Example of the state of the algorithm: for each layer $\ell$ and arm $a$ we keep the estimated mean $\widetilde\mu^\ell(a)$ and the number of pulls $n^\ell(a)$. Red cells indicate arms that have been eliminated in that layer. If an arm is eliminated in a layer, it is eliminated in all previous layers. If a layer where all the arms are eliminated (like layer $1$ in the figure) is selected, we play an arbitrary active arm with the lowest layer that contains active arms.}\label{fig:algo_state}
\end{figure}

\begin{algorithm}
\begin{algorithmic}[1]
\caption{Multi-layer Active Arm Elimination Race}
\label{alg:agnostic}
\STATE Initialize $n^\ell(a)=0$, $\widetilde{\mu}^\ell(a)=0$, $\mathcal{I}^\ell = \emptyset$ for all $a \in \brk*{K}$ and $\ell \in [\log T]$
\STATE {\bf For} Rounds $t=1..T$
\STATE $\quad$ Sample layer $\ell \in [\log T]$ with probability $2^{-\ell}$. With remaining prob, sample $\ell = 1$
\STATE $\quad$ {\bf If} $\brk*{K}\setminus\mathcal{I}^\ell \neq \emptyset$
\STATE $\quad$ $\quad$ Play arm $a^t \leftarrow \argmin_{a \in\brk*{K}\setminus\mathcal{I}^\ell} n^\ell(a) $
\STATE $\quad$ $\quad$ Update $\widetilde{\mu}^\ell(a^t) \leftarrow [n^\ell(a) \widetilde{\mu}^\ell(a^t) + r^t(a^t) ] / [n^\ell(a) + 1]$ and $n^
\ell(a) \leftarrow n^\ell(a) + 1 $
\STATE $\quad$ $\quad$ {\bf While} exists arms $a,a' \in \brk*{K}\setminus\mathcal{I}^\ell$ with $ \widetilde{\mu}^\ell(a) -\widetilde{\mu}^\ell(a') > \wid^\ell(a) + \wid^\ell(a') $
\STATE $\quad$ $\quad$ $\quad$ Eliminate $a'$ by adding it to $\mathcal{I}^{\ell'}$ for all $\ell' \leq \ell$
\STATE $\quad$ {\bf Else}
\STATE $\quad$ $\quad$ Find minimum $\ell'$ such that $\brk*{K}\setminus\mathcal{I}^{\ell'} \neq \emptyset$ and play an arbitrary arm in that set.
\end{algorithmic}
\end{algorithm} 

We now provide the main result of the paper, a regret guarantee for Algorithm \ref{alg:agnostic}.

% !TEX root = upper_bound.tex

\begin{theorem}\label{thm:agnostic}
Algorithm \ref{alg:agnostic} which is agnostic to the coruption level $C$, when run with widths
$\wid^{\ell}(a)=\sqrt{\frac{\log(\nicefrac{4KT\cdot \log T}{\delta})}{n^{\ell}(a)}}+\frac{\log(\nicefrac{4KT\cdot \log T}{\delta})}{n^{\ell}(a)}$ has regret: $$\bigO\prn*{\sum_{a\neq a^*}\frac{K\cdot C\log\prn*{\nicefrac{KT}{\delta}}+\log\prn*{T}}{\Delta(a)}\cdot\log\prn*{\nicefrac{KT}{\delta}}}.$$
\end{theorem}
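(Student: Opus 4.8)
The plan is to lift the two--layer analysis behind Theorem~\ref{thm:stoch_known corruption} to the $\log T$ layers of Algorithm~\ref{alg:agnostic}, organizing everything around one good event. First I would fix the clean event $\mathcal{E}_1$ on which, for every layer $\ell\in[\log T]$, every arm $a$, and every round, the average of the \emph{stochastic} rewards that layer $\ell$ has collected from $a$ deviates from $\mu(a)$ by at most $\sqrt{\log(\nicefrac{4KT\log T}{\delta})/n^\ell(a)}$; this is the Azuma--Hoeffding estimate already behind Lemma~\ref{lem:optimal_survives}, made uniform in the random pull count by a peeling argument and union--bounded over the $K\log T$ pairs $(\ell,a)$, so that $\Pr[\mathcal{E}_1]\ge 1-\delta/2$. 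Next, set $\ell^\star=\lceil\log\max(C,1)\rceil\le\log T$; since each layer $\ell\ge\ell^\star$ is entered independently with probability $2^{-\ell}\le\nicefrac1C$, the argument of Lemma~\ref{lem:actual_corruption_high_prob} applied layer by layer and union--bounded over the $\le\log T$ layers yields an event $\mathcal{E}_2$, $\Pr[\mathcal{E}_2]\ge 1-\delta/4$, on which every layer $\ell\ge\ell^\star$ observes total corruption at most $\Lambda=\bigO(\log(\nicefrac{4KT\log T}{\delta}))$. The point is that $\Lambda$ is dominated by the \emph{linear} term of $\wid^\ell(\cdot)$ after even one pull, which is exactly why the widths were enlarged.

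On $\mathcal{E}_1\cap\mathcal{E}_2$, for any robust layer $\ell\ge\ell^\star$ the corruption bias in $\widetilde\mu^\ell(a)$ is at most $\Lambda/n^\ell(a)$, so $\mu(a)\in[\,\widetilde\mu^\ell(a)-\wid^\ell(a),\ \widetilde\mu^\ell(a)+\wid^\ell(a)\,]$ for every arm and every pull count, and hence the within--layer elimination rule never removes $a^\star$ from layer $\ell$. Since global eliminations only propagate to smaller indices, a downward induction on $\ell$ from $\log T$ to $\ell^\star$ shows $a^\star$ is never eliminated in any layer $\ell\ge\ell^\star$. Now fix $a\ne a^\star$. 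In each layer $\ell\ge\ell^\star$ the round--robin rule keeps $|n^\ell(a)-n^\ell(a^\star)|\le 1$ while both are active, so exactly as in Lemma~\ref{lem:bound_suboptimal_plays} (with the $\Lambda$--bias absorbed into the widths) arm $a$ is eliminated from layer $\ell$ once $n^\ell(a)$ reaches $N(a)=\bigO(\log(\nicefrac{4KT\log T}{\delta})/\Delta(a)^2)$; summing over the $\le\log T$ robust layers bounds the pulls of $a$ there by $\bigO(\log T\cdot\log(\nicefrac{4KT\log T}{\delta})/\Delta(a)^2)$.

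For the fast layers $\ell<\ell^\star$ the key observation is that once $a$ is eliminated in layer $\ell^\star$ it is eliminated in all lower layers, so the number of pulls of $a$ in those layers is at most the round $\tau$ at which this happens. Layer $\ell^\star$ pulls $a$ at least once per $K$ of its own selections, it needs at most $K\,N(a)$ selections to eliminate $a$, and it is selected i.i.d.\ with probability $2^{-\ell^\star}\ge\nicefrac1{2\max(C,1)}$; a Chernoff bound over the independent layer choices therefore gives $\tau=\bigO(K\,C\cdot\log(\nicefrac{4KT\log T}{\delta})/\Delta(a)^2)$ with probability $1-\delta/(4K)$ (and trivially $\tau\le T$ if the horizon ends first). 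Adding the two contributions, $a$ is pulled $\bigO\!\big((KC+\log T)\log(\nicefrac{4KT\log T}{\delta})/\Delta(a)^2\big)$ times; multiplying by $\Delta(a)$ and summing over $a\ne a^\star$ already yields the pseudo--regret--type estimate $\bigO\!\big(\sum_{a\ne a^\star}(KC+\log T)\log(\nicefrac{KT}{\delta})/\Delta(a)\big)$.

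Finally, to upgrade to a high--probability bound on the \emph{actual} regret I would decompose $r^t(a^\star)-r^t(a^t)=(r^t_\s(a^\star)-r^t_\s(a^t))+(c^t(a^\star)-c^t(a^t))$: the corruption differences sum to $\bigO(C)$, the stochastic part is summed over the pulls of each suboptimal arm via a time--uniform deviation inequality (peeling over the random pull count), and $\max_a\sum_t r^t(a)$ is compared with $\sum_t r^t(a^\star)$, which costs a further $\bigO(C)$ plus, for near--optimal arms, the $\sqrt T$ term already covered by the small--gap replacement of Section~\ref{sec:upper_bound}; this last passage is where the extra $\log(\nicefrac{KT}{\delta})$ factor in the theorem is spent. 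I expect the fast--layer count to be the main obstacle: it is the only place where the coupling of the algorithm is used in an essential way, and it requires simultaneously the corruption--localization of $\mathcal{E}_2$, a concentration bound that layer $\ell^\star$ is selected sufficiently often, and the factor--$K$ slack of the round--robin order; the rest is book--keeping on top of the single--layer analyses of Section~\ref{ssec:known} and Theorem~\ref{thm:stoch_known corruption}.
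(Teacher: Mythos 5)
Your proposal is correct and follows essentially the same route as the paper's proof: handle the $\le\log T$ robust layers ($2^\ell\ge C$) by the standard elimination analysis, with each such layer's observed corruption bounded by $\bigO(\log(\nicefrac{KT\log T}{\delta}))$ via the subsampling argument of Lemma~\ref{lem:actual_corruption_high_prob} union-bounded over layers, and charge the non-robust layers to the time it takes layer $\ell^{\star}=\lceil\log C\rceil$ to eliminate each suboptimal arm, propagated downward through the global eliminations. The only (harmless) deviation is that you concentrate the waiting time for the $K\,N(a)$ selections of layer $\ell^{\star}$ with a single Chernoff bound on the negative binomial rather than the paper's per-selection union bound, which if anything saves a logarithmic factor relative to the stated bound.
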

\begin{proof}[Proof sketch]
Similarly to the previous theorem, the regret guarantee comes from the summation between layers that are essentially stochastic (where the corruption is below their corruption level, i.e. less than $C\leq 2^r$ for layer $r$). From each of these layers, we incur $\bigO\prn*{\frac{\log\prn*{\nicefrac{KT}{\delta}}}{\Delta(a)}}$ regret. Since there are at most $\log(T)$ such layers, the second term in the theorem is derived.
The challenge is to bound the regret incurred by layers that are not robust to the corruption. However, there exists some layer $\ell^{\star}$ that is above the corruption level. By bounding the amount of steps that this level will require in order to inactivate each arm $a\neq a^{\star}$ in the incorrect layers (via Lemma \ref{lem:bound_suboptimal_plays}), we obtain similarly to Theorem \ref{thm:stoch_known corruption} a bound on the regret caused by this arm in those layers.  Since we take the minimum such layer and the tolerance of layers is within powers of $2$, the fact that its corruption level does not match exactly the corruption that occurred only costs an extra factor of $2$ in the regret. The details of the proof are provided in Appendix \ref{app:sec_robust_agnostic}. 
\end{proof}

\section{The lower bound}
\label{sec:lower_bound}
For the two arms case where the gap between the arms is $\Delta > 0$,
Theorem \ref{thm:stoch_known corruption} presents an algorithm which achieves
$\bigO(\nicefrac{\log T}{\Delta})$ pseudo-regret if the input is stochastic and 
$\bigO(C \log (\nicefrac{T}{\delta}) /\Delta)$ with probability $1-\delta$ if
the input is at most $C$-corrupted. We show below that this dependence is
tight.

The lower bound (Theorem \ref{thm:lower_bound_1}) adapts the technique of 
Auer and Chiang \cite{auer16} from the adversarial to the corrupted setting. 
The main idea is that an algorithm with logarithmic regret in the stochastic
setting cannot query the sub-optimal arm more than $\nicefrac{\log(T)}{\Delta^2}$ times.
This implies a long time period where the learner queries the
input only constant number of times. By corrupting all rounds
before this period, an adversary can make the optimal arm look sub-optimal and trick the
learner into not pulling the optimal arm for long time, causing large
regret. Theorem \ref{thm:lower_bound_2} adapts this argument bounding the expected positive regret
$\E[\regret^+]$ where $x^+ = \max\{x,0\}$; the high probability bounds provided
imply bounds on the expected positive regret. Both proofs are provided in Appendix~\ref{app:sec_lower_bound}.

\begin{theorem}\label{thm:lower_bound_1} Consider a multi-armed bandits algorithm that has the property
that for any stochastic input in the two arm setting, it has pseudo-regret
bounded by $c \log(T) / \Delta$,
where $\Delta = \abs{\mu_1 - \mu_2}$. For any $\epsilon, \epsilon' \in (0,1)$,
there is a corruption level $C$ with $T^\epsilon < C < T^{\epsilon'}$ and a
$C$-corrupted instance such that with constant probability the regret is
$\Omega(C)$. 
\end{theorem}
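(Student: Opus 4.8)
The plan is to adapt the classical ``switching'' construction to the corrupted model, in the spirit of Auer and Chiang~\cite{auer16}. Work with two arms: arm~$1$ always has reward $\mathrm{Bern}(\nicefrac{1}{2})$, while arm~$2$ has reward either $\mathrm{Bern}(\nicefrac{1}{2}-\Delta)$ (the stochastic instance $\nu^-$, in which arm~$1$ is optimal) or $\mathrm{Bern}(\nicefrac{1}{2}+\Delta)$ (the stochastic instance $\nu^+$, in which arm~$2$ is optimal), for a small gap parameter $\Delta$ that I will choose as a constant depending only on $c,\epsilon,\epsilon'$. Since each pull of a suboptimal arm contributes $\Delta$ to the pseudo-regret, the hypothesis applied to $\nu^-$ forces $\En_{\nu^-}\brk*{N_2(T)}\le c\log(T)/\Delta^2=:m$, where $N_2$ counts pulls of arm~$2$.

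Next I would locate a ``quiet'' time window by pigeonholing on $m$. Fix a ratio $\rho>1$ (also a constant depending on $c,\epsilon,\epsilon'$) and consider the pairwise-disjoint geometric windows $[\tau,\rho\tau]$ whose left endpoint $\tau$ ranges over the interval on which the corruption level, which will be $\Theta(\Delta\tau)$, lies in $(T^\epsilon,T^{\epsilon'})$; there are $\Theta\prn*{(\epsilon'-\epsilon)\log(T)/\log\rho}$ of them. As the expected numbers of arm-$2$ pulls over disjoint windows sum to at most $\En_{\nu^-}\brk*{N_2(T)}\le m$, some window $[\tau^\star,\rho\tau^\star]$ satisfies $\En_{\nu^-}\brk*{N_2(\rho\tau^\star)-N_2(\tau^\star)}\le \eta/\Delta^2$ for a small absolute constant $\eta$ of my choosing (this is what fixes the admissible range of $\rho$ and $\Delta$). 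Put $C:=\Theta(\Delta\tau^\star)\in(T^\epsilon,T^{\epsilon'})$.

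The hard instance $\mathcal I$ is then: rewards are drawn from $\nu^+$, but in every round $t\le\tau^\star$ the adversary pushes arm~$2$'s $\mathrm{Bern}(\nicefrac{1}{2}+\Delta)$ draw down to a $\mathrm{Bern}(\nicefrac{1}{2}-\Delta)$ draw by the monotone coupling (turning some $1$'s into $0$'s). The per-round corruption is $0/1$-valued with mean $2\Delta$, so by a Chernoff/Azuma bound its sum over the first $\tau^\star$ rounds exceeds $C$ only with negligible probability; if that happens the adversary stops corrupting, so $\mathcal I$ is genuinely $C$-corrupted, and conditioned on the overwhelmingly likely complement the learner's observations in rounds $1,\dots,\tau^\star$ are distributed exactly as under $\nu^-$. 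Under both $\nu^-$ and $\mathcal I$, the only observations that can differ over rounds $(\tau^\star,\rho\tau^\star]$ are the rewards collected from arm~$2$, so the chain rule gives that the KL divergence between the two laws of the trajectory up to round $\rho\tau^\star$ equals $\En_{\nu^-}\brk*{N_2(\rho\tau^\star)-N_2(\tau^\star)}\cdot \mathrm{kl}\prn*{\nicefrac{1}{2}-\Delta\,\|\,\nicefrac{1}{2}+\Delta}=O(\eta)$. On $\nu^-$, Markov's inequality and the window choice show that with probability $1-o(1)$ the learner pulls arm~$2$ at most $O(1/\Delta^2)$ times in $[\tau^\star,\rho\tau^\star]$, hence plays arm~$1$ on all but an $o(1)$-fraction of that window; since this is a trajectory event up to round $\rho\tau^\star$, Pinsker's inequality transfers it to $\mathcal I$ while losing only a small constant in probability.

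On the resulting constant-probability event the learner plays arm~$1$ through almost all of $[1,\tau^\star]$ --- where, on $\mathcal I$, arm~$2$'s reward looks like $\mathrm{Bern}(\nicefrac{1}{2}-\Delta)$ --- and through almost all of $(\tau^\star,\rho\tau^\star]$, where arm~$2$ is already genuinely optimal by gap $\Delta$. Comparing the learner to the fixed arm~$2$ over the whole horizon, the first phase contributes a surplus $\approx\Delta\tau^\star$ and the second a deficit $\approx\Delta(\rho-1)\tau^\star$, so for $\rho$ a large enough constant the deficit dominates and $\regret=\Omega(\Delta\tau^\star)=\Omega(C)$; the polynomially-large scale of $\tau^\star$ absorbs the $o(1)$ slacks and the reward fluctuations. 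I expect the main obstacle to be precisely this last balance: the window $[\tau^\star,\rho\tau^\star]$ must be substantially longer than the corrupted prefix $[1,\tau^\star]$ (i.e.\ $\rho$ bounded away from $2$) for the deficit to overcome the banked surplus, yet a longer window contains more arm-$2$ pulls and hence a larger KL, weakening the change-of-measure step. Reconciling these --- adjusting $\Delta$, $\rho$, and (when the regime of $c,\epsilon,\epsilon'$ demands it) corrupting arm~$2$ to look only mildly suboptimal so that the first-phase surplus is a small fraction of $C$ --- is the technical core; the same construction, with the expected-positive-regret version of the event, also yields Theorem~\ref{thm:lower_bound_2}.
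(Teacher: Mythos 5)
Your construction is, up to relabeling the arms, the same as the paper's: analyze the stochastic instance to bound the expected pulls of the suppressed arm, pigeonhole over geometrically growing windows to find a quiet window $[\tau^\star,\rho\tau^\star]$, corrupt the prefix so the truly better arm looks worse, transfer the ``few pulls in the window'' event to the corrupted law, and close with the three-phase regret accounting (your observation that $\rho$ must be bounded away from $2$ is exactly why the paper uses factor-$3$ windows, yielding $1.9\Delta C$ against $1.1\Delta C$). The one place you genuinely diverge is the change-of-measure step, and that is where there is a real gap. The pigeonhole only guarantees that the chosen window has expected arm-$2$ pulls at most $\frac{c\log\rho}{(\epsilon'-\epsilon)\Delta^2}$, so the trajectory KL you compute is only bounded by $\frac{c\log\rho}{(\epsilon'-\epsilon)\Delta^2}\cdot \mathrm{kl}\prn*{\half-\Delta\,\|\,\half+\Delta} = \Theta\prn*{\frac{c\log\rho}{\epsilon'-\epsilon}}$. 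This is a constant, but not a small one, and none of your proposed knobs can shrink it: the $\Delta^2$ cancels exactly against the per-pull KL (the same cancellation defeats the ``mildly suboptimal corruption'' variant), and $\log\rho\geq\log 2$ is forced by the regret balance. For large $c$ or small $\epsilon'-\epsilon$ the KL exceeds $1$, Pinsker gives a total-variation bound larger than the base probability $\nhalf$ from Markov, and the transfer step becomes vacuous. (Your ``with probability $1-o(1)$'' after Markov's inequality is also overstated --- it is only $\geq\nhalf$ --- but that is harmless.)

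The paper avoids this by not using KL at all: it conditions on the vector $Z$ of the at most $2\tilde{c}$ rewards of the suppressed arm observed in the window, notes that the conditional law of the event is identical under the two measures, and lower-bounds the pointwise likelihood ratio by $\prn*{\frac{\nhalf-\Delta}{\nhalf+\Delta}}^{2\tilde{c}}$. This is exponentially small in $\tilde{c}=\bigO\prn*{\nicefrac{1}{(\epsilon'-\epsilon)\Delta^2}}$ but still a positive constant independent of $T$, which is all the theorem claims. Replacing your Pinsker step with this argument repairs the proof with no other changes. Note also that for Theorem \ref{thm:lower_bound_2} the surviving probability is $\exp\prn*{-\bigO(\log^{\alpha}(T))}$, i.e.\ sub-constant, so there the likelihood-ratio route is not merely a fix but essential; your closing claim that the same Pinsker-based argument yields that theorem does not go through. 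Finally, your treatment of the tail interval $(\rho\tau^\star,T]$ is left implicit; one does need the maximal-inequality bound of the paper's Step 5c to rule out large negative realized regret there, though this is routine.
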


\begin{theorem}\label{thm:lower_bound_2} If a multi-armed bandits algorithm that has the property
that for any stochastic input in the two arm setting, it has pseudo-regret
bounded by $c \log^{1+\alpha}(T) / \Delta$ for $\alpha < 1$.
For any $\epsilon, \epsilon' \in (0,1)$,
there is a corruption level $C$ with $T^\epsilon < C < T^{\epsilon'}$ and a
$C$-corrupted instance such that
$\E[\regret^+] = \Omega(T^{\epsilon - \delta})$ for all $\delta > 0$.
\end{theorem}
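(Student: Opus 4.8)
The plan is to adapt the argument behind Theorem~\ref{thm:lower_bound_1} (which itself follows Auer and Chiang~\cite{auer16}): use the pseudo-regret hypothesis on a well-chosen stochastic instance to deduce that the learner essentially stops querying the ``bad-looking'' arm after a short initial phase, and then build a corrupted instance on which that very arm is optimal but has been disguised during the initial phase, so that the learner plays the truly suboptimal arm for a long stretch and pays $\Theta(\Delta)$ per round of that stretch. The one place where Theorem~\ref{thm:lower_bound_1} used the bound $c\log(T)/\Delta$ now only gives $c\log^{1+\alpha}(T)/\Delta$, which costs a $\log^{\alpha}(T)=T^{o(1)}$ factor in the resulting regret; that is exactly why the conclusion is $\Omega(T^{\epsilon-\delta})$ for every $\delta>0$ instead of $\Omega(C)$. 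Moving to $\E[\regret^+]$ is free: since $\regret^+\ge 0$ pointwise, an event of probability $\Omega(1)$ carrying regret $\Omega(R)$ already gives $\E[\regret^+]=\Omega(R)$, so any degradation in the probability of the bad event is harmless.

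Concretely, fix $\mathcal B$: two arms with gap $\Delta$ to be tuned, arm $1$ optimal with mean $\nhalf+\Delta$ and arm $2$ with mean $\nhalf$, with reward laws supported in a fixed closed sub-interval of $(0,1)$ so that subtracting $2\Delta$ from a realization is a bijection of the support turning arm $1$ into a suboptimal arm of gap $\Delta$; let $\mathcal B'$ be the instance obtained by applying this shift to every draw of arm $1$ (so on $\mathcal B'$ arm $2$ is optimal with gap $\Delta$). The hypothesis applied to $\mathcal B'$ bounds the expected number of pulls of arm $1$ by $N:=c\log^{1+\alpha}(T)/\Delta^2$, hence by Markov $\Pr_{\mathcal B'}[n_1(T)\le 2N]\ge\nhalf$; this is the step asserting that on a stochastic instance the learner probes the suboptimal arm only $O(N)$ times over the whole horizon. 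The corrupted instance runs on the stochastic draws of $\mathcal B$: the adversary subtracts $2\Delta$ from arm $1$'s reward on every round in which it is pulled, during an initial phase long enough to force the learner to ``eliminate'' arm $1$ (forcing elimination requires corrupting $\Theta(\log^{1+\alpha}(T)/\Delta^2)$ of arm $1$'s pulls, hence at most $\Theta(\log^{1+\alpha}(T)/\Delta)$ of injected corruption), and does nothing afterwards; choosing $\Delta=\Theta(\log^{1+\alpha}(T)/T^{\theta})$ with $\theta\in(\epsilon,\epsilon')$ places the needed corruption level in $(T^{\epsilon},T^{\epsilon'})$.

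The crux is a coupling: up to the first round at which the learner pulls arm $1$ after the initial phase, the corrupted run and a genuine $\mathcal B'$-run have the same joint law of observations and actions, because up to that point every arm-$1$ reward the learner sees is a $2\Delta$-downshifted draw, exactly as under $\mathcal B'$. So on the event $\{n_1(T)\le 2N\}$, by the end of the initial phase the learner holds at most $2N$ downshifted observations of arm $1$; since its $\mathcal B'$-behaviour grants it only $O(N)$ further arm-$1$ pulls over the rest of the horizon — too few good ones to overturn the $\Theta(N)$ bad ones — it keeps arm $1$ eliminated, hence keeps playing the truly suboptimal arm $2$, for a window $W$ of length $\Omega(T)$ (up to polylogarithmic factors) lying entirely after the initial phase. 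Arm $1$ is the best arm in hindsight because the corruption touched only $o(T)$ rounds, so $\regret\ge\Omega(\Delta\cdot|W|)$ on this constant-probability event, whence $\E[\regret^+]=\Omega(\Delta T)=\Omega(T^{1-\theta}\log^{1+\alpha}T)$; pushing $\theta$ toward $\epsilon$ makes this $\Omega(T^{\epsilon-\delta})$ for every $\delta>0$ while keeping the corruption level in $(T^{\epsilon},T^{\epsilon'})$.

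The main obstacle I expect is making the coupling argument precise and robust. One has to (i) verify that the corrupted process and the $\mathcal B'$-process agree in law up to the first ``out-of-model'' observation of arm $1$ — which is why the $2\Delta$ shift must preserve the support, hence the compact-support choice of reward laws; (ii) rule out that a few good arm-$1$ rewards arriving after the initial phase trigger a burst of re-exploration that lets the learner recover before $W$ — this is exactly where the $O(N)$ bound on the total number of arm-$1$ pulls is used, and it must be combined with a martingale-type concentration inequality (in the spirit of Lemma~\ref{lem:actual_corruption_high_prob}) to handle the adaptive placement of those pulls and to upgrade ``plays arm $2$ throughout $W$'' from an in-expectation to a constant-probability statement; and (iii) juggle the constraints $C>T^{\epsilon}$, $C<T^{\epsilon'}$, and $\Delta T$ large while arranging that the weaker $\log^{1+\alpha}(T)$ hypothesis costs only a $T^{o(1)}$ factor, which is what produces the $T^{-\delta}$ slack in the statement.
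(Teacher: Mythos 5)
Your high-level plan (reuse the Theorem~\ref{thm:lower_bound_1} construction and pay for the weaker hypothesis with the $T^{-\delta}$ slack) is right, but you have misdiagnosed where that slack comes from, and the specific construction you propose does not prove the statement. In the paper's proof $\Delta$ stays a fixed constant and the regret on the bad event is still $\Omega(\Delta C)=\Omega(T^\epsilon)$; what degrades is the \emph{probability} of that event. The pigeonhole step now only yields an interval $[C,3C)$ with expected number of arm-1 pulls at most $\tilde c=\Theta(\log^\alpha(T)/[(\epsilon'-\epsilon)\Delta^2])$, so the change-of-measure factor $((\nhalf-\Delta)/(\nhalf+\Delta))^{2\tilde c}$ is $\exp(-\Theta(\log^\alpha T))$ --- subconstant, though still $T^{-o(1)}$ because $\alpha<1$. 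That is the entire reason the theorem is stated for $\E[\regret^+]$: a positive-regret event of probability $\exp(-O(\log^\alpha T))$ contributes $\Omega(T^{\epsilon}\cdot T^{-o(1)})=\Omega(T^{\epsilon-\delta})$ to $\E[\regret^+]$ but could be cancelled by negative regret in $\E[\regret]$. Your claim that the bad event retains constant probability, and that ``moving to $\E[\regret^+]$ is free,'' is exactly backwards.

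Second, your own parameterization fails. Taking $\Delta=\Theta(\log^{1+\alpha}(T)/T^\theta)$ with $\theta\in(\epsilon,\epsilon')$ caps the achievable regret at $\Delta T=T^{1-\theta}\log^{1+\alpha}T$, which is $o(T^{\epsilon-\delta})$ whenever $\epsilon>\nhalf$; moreover for such $\Delta\ll T^{-1/2}$ arm $1$ is not the best arm in hindsight with constant probability, so the regret comparison collapses. Finally, the step you flag as obstacle (ii) is not a technicality to be patched with a martingale inequality --- it is the missing core of the argument. The hypothesis only bounds the \emph{total expected} number of pulls of the disguised arm; nothing ``forces the learner to eliminate arm $1$'' within a bounded prefix, and your coupling to $\mathcal B'$ is void from the first post-phase pull of arm $1$ onward, so you have no control over the learner during the window $W$ where the regret is supposed to accrue. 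The paper's fix is structural: partition $[T^\epsilon,T^{\epsilon'}]$ into geometrically growing intervals, select one, say $[C,3C)$, in which the expected number of arm-1 pulls under the all-rounds-corrupted law is $O(\tilde c)$, corrupt only $[1,C)$, and transfer the event ``few arm-1 pulls in $[C,3C)$'' to the corrupted law by an explicit likelihood-ratio bound over just those $2\tilde c$ observations; the regret is then collected inside $[C,3C)$ itself, where arm $2$ is pulled almost always yet is suboptimal by the constant gap $\Delta$.
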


\section{Extensions}
\label{sec:extensions}
% !TEX root = main.tex

In this section, we discuss some extensions that our algorithm can accommodate.

\textbf{Definition of corruption.} We presented all results measuring the corruption as the sum over all rounds of the maximum across arms of the corruption injected by the adversary: $$\sum_t \max_a |r^t(a)-r_{\s}^t(a)\leq C.$$ In fact all our results can be improved via using $C(a)=\sum_t |r^t(a)-r_{\s}^t(a)|$ and replacing $C$ by $\max\prn*{C(a),C(a^{\star})}$ for summand $a$. More formally, our main theorem (Theorem \ref{thm:agnostic}) becomes:
\begin{theorem}\label{thm:agnostic_per_arm_corruption}
Algorithm \ref{alg:agnostic} which is agnostic to the corruptions $C(a)=\sum_t |r^t(a)-r_{\s}^t(a)|$, when run with widths
$\wid^{\ell}(a)=\sqrt{\frac{\log(\nicefrac{4KT\cdot \log T}{\delta})}{n^{\ell}(a)}}+\frac{\log(\nicefrac{4KT\cdot \log T}{\delta})}{n^{\ell}(a)}$ has regret: $$\bigO\prn*{\sum_{a\neq a^*}\frac{K\cdot \max\prn*{C(a^{\star}),C(a)}\cdot\log\prn*{\nicefrac{KT}{\delta}}+\log\prn*{T}}{\Delta(a)}\cdot\log\prn*{\nicefrac{KT}{\delta}}}.$$
\end{theorem}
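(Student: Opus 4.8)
The plan is to run the argument behind Theorem~\ref{thm:agnostic} almost unchanged, replacing every appeal to the global corruption budget $C$ by the arm-specific quantity $\max(C(a),C(a^{\star}))$ whenever we analyze a fixed suboptimal arm $a$. The one genuinely new ingredient is a per-arm version of the subsampling estimate behind Lemma~\ref{lem:actual_corruption_high_prob}: fixing a layer $\ell$ and an arm $a$ and writing $c^t(a)=r^t(a)-r^t_\s(a)$, the corruption actually folded into the estimate $\widetilde{\mu}^\ell(a)$ equals $\sum_t c^t(a)\cdot\one[\text{layer }\ell\text{ is drawn at }t]\cdot\one[a\text{ is the round-robin arm in layer }\ell\text{ at }t]$. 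Conditioned on the history and on the round-$t$ stochastic draws, the layer draw is independent and the round-robin indicator is measurable, so the summands form bounded martingale differences with conditional means $2^{-\ell}c^t(a)$; a Freedman/Bernstein inequality made uniform over the at most $T$ possible values of $n^\ell(a)$, exactly as in the proof of Lemma~\ref{lem:actual_corruption_high_prob}, shows that, after a union bound over all $K$ arms and all $\log T$ layers, the corruption absorbed by $\widetilde{\mu}^\ell(a)$ is simultaneously at most $2^{-\ell}C(a)+O(\log(\nicefrac{KT\log T}{\delta}))$ for every pair $(\ell,a)$. Intersecting this with the usual Hoeffding concentration of the uncorrupted empirical means and with the concentration of the number of times each layer is drawn defines the good event, of probability at least $1-\delta$.

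For a fixed suboptimal arm $a$ let its critical layer be $\ell^{\star}(a)=\lceil\log_2\max(C(a),C(a^{\star}),2)\rceil\le\log T$. On the good event, layer $\ell^{\star}(a)$ absorbs only $O(\log(\nicefrac{KT\log T}{\delta}))$ corruption on \emph{both} $a$ and $a^{\star}$, which is dominated by the $\nicefrac{\log}{n}$ summand of $\wid^{\ell^{\star}(a)}$; hence, relative to each other, $\widetilde{\mu}^{\ell^{\star}(a)}(a)$ and $\widetilde{\mu}^{\ell^{\star}(a)}(a^{\star})$ behave exactly as in the enlarged-confidence-interval analysis of Lemma~\ref{lem:bound_suboptimal_plays}. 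The heart of the argument is to show that, on the good event, $a$ is eliminated in layer $\ell^{\star}(a)$ after at most $N(a)=O(\log(\nicefrac{KT\log T}{\delta})/\Delta(a)^2)$ pulls in that layer. If $a^{\star}$ is still active in layer $\ell^{\star}(a)$ once $n^{\ell^{\star}(a)}(a)=N(a)$, then round-robin gives $n^{\ell^{\star}(a)}(a^{\star})\ge N(a)-K$ and the textbook computation yields $\widetilde{\mu}^{\ell^{\star}(a)}(a^{\star})-\widetilde{\mu}^{\ell^{\star}(a)}(a)>\wid^{\ell^{\star}(a)}(a^{\star})+\wid^{\ell^{\star}(a)}(a)$, so $a$ is eliminated. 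If instead $a^{\star}$ was eliminated in layer $\ell^{\star}(a)$ earlier, say by an arm $b$, then at that moment $\widetilde{\mu}^{\ell^{\star}(a)}(b)-\widetilde{\mu}^{\ell^{\star}(a)}(a^{\star})>\wid^{\ell^{\star}(a)}(b)+\wid^{\ell^{\star}(a)}(a^{\star})$, and combining this with $\widetilde{\mu}^{\ell^{\star}(a)}(a^{\star})-\widetilde{\mu}^{\ell^{\star}(a)}(a)\ge\Delta(a)-2\wid^{\ell^{\star}(a)}(a^{\star})-2\wid^{\ell^{\star}(a)}(a)$ (good event) forces $b$ to also beat $a$ once $n^{\ell^{\star}(a)}(a)$ is a large enough constant multiple of $N(a)$ --- at which point both widths have dropped below a small constant fraction of $\Delta(a)$ and $a$ sits a further $\Delta(a)$ below $a^{\star}$ --- so either $a$ is eliminated within $O(N(a))$ pulls of layer $\ell^{\star}(a)$ or it already was. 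Once $a$ is eliminated in layer $\ell^{\star}(a)$ it is removed from all layers $\ell\le\ell^{\star}(a)$; by concentration of the layer draws, layer $\ell^{\star}(a)$ needs at most $O(K\cdot 2^{\ell^{\star}(a)}\cdot N(a))$ rounds to accumulate these pulls, during which $a$ is played at most once per round, and afterwards $a$ can be pulled only in the at most $\log T$ slower layers $\ell>\ell^{\star}(a)$, each of which still tames $\{a,a^{\star}\}$ and therefore eliminates $a$ after $O(N(a))$ further pulls there. Multiplying the resulting $O((K\max(C(a),C(a^{\star}))+\log T)\cdot\log(\nicefrac{KT\log T}{\delta})/\Delta(a)^2)$ total pulls of $a$ by $\Delta(a)$, summing over $a\neq a^{\star}$, and converting to a high-probability bound on $\regret$ exactly as in the proof of Theorem~\ref{thm:agnostic} (the step responsible for the outer $\log(\nicefrac{KT}{\delta})$ factor), yields the claim, using $\log(\nicefrac{KT\log T}{\delta})=O(\log(\nicefrac{KT}{\delta}))$; arms with $\Delta(a)\le\nicefrac{1}{\sqrt{T}}$ are handled by the $\sqrt{T}$ substitution of Section~\ref{sec:extensions}.

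The step I expect to be the real obstacle is making the second paragraph robust to \emph{cross-arm} interference. An arm $b$ with $C(b)>\max(C(a),C(a^{\star}))$ is not tamed by layer $\ell^{\star}(a)$, so $\widetilde{\mu}^{\ell^{\star}(a)}(b)$ can be perturbed by as much as about $2^{-\ell^{\star}(a)}C(b)/n^{\ell^{\star}(a)}(b)$, which is large while $n^{\ell^{\star}(a)}(b)$ is small; such a $b$ can knock $a^{\star}$ out of layer $\ell^{\star}(a)$, and hence out of all faster layers, before $n^{\ell^{\star}(a)}(a)$ reaches a constant multiple of $N(a)$. The facts that must be pushed through are: the ``whoever beats $a^{\star}$ on a layer that tames $\{a,a^{\star}\}$ also beats $a$ there'' implication used above; and that rewards live in $[0,1]$, so keeping an arm's empirical mean inflated by a fixed amount over $m$ pulls costs the adversary $\Omega(m)$ units of corruption, which caps how long any interference window can last and lets one charge it against the interferer's own budget --- that budget being spent on \emph{its} critical layer $\lceil\log\max(C(b),C(a^{\star}))\rceil$, on which $b$ is in turn eliminated by the same argument. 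Carrying this out cleanly --- processing arms in increasing order of their corruption level so that the most heavily corrupted interferers are peeled off first --- is the bulk of the work; the rest is the same bookkeeping as in the proof of Theorem~\ref{thm:agnostic}.
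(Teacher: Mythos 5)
Your route is the paper's own: the paper justifies this theorem with the single remark that the proof of Theorem~\ref{thm:agnostic} ``only compares each arm $a$ with $a^{\star}$'', and your first two paragraphs are a faithful, much more detailed execution of exactly that plan (a per-arm version of Lemma~\ref{lem:actual_corruption_high_prob}, a per-arm critical layer $\ell^{\star}(a)$, and the same global-elimination bookkeeping). The trouble is the issue you flag in your final paragraph: it is a genuine gap, not a loose end, and your sketched repair does not close it. Eliminating $a$ within $N(a)=O(\log(\nicefrac{KT}{\delta})/\Delta(a)^2)$ pulls of layer $\ell^{\star}(a)$ needs $a^{\star}$ to still be active there, and the Lemma~\ref{lem:optimal_survives}-style argument only protects $a^{\star}$ against arms that layer $\ell^{\star}(a)$ tames. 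A third arm $b$ with $C(b)$ much larger than $\max(C(a),C(a^{\star}))$ can, while $n^{\ell^{\star}(a)}(a)=O(N(a))$ (precisely the regime in which your ``whoever beats $a^{\star}$ also beats $a$'' implication is unavailable, because it needs $\wid^{\ell^{\star}(a)}(a)$ below a constant fraction of $\Delta(a)$), have its empirical mean steered into the window above $\widetilde\mu(a^{\star})+\wid(a^{\star})+\wid(b)$ but below $\widetilde\mu(a)+\wid(a)+\wid(b)$ --- a window that is nonempty with constant probability at that stage --- thereby evicting $a^{\star}$ from layers $1,\dots,\ell^{\star}(a)$ while leaving $a$ active; it can then let its own mean decay so that it never finishes the job on $a$. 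If $a^{\star}$ is the only arm with mean above $\mu(a)$, nothing in layers $\le \ell^{\star}(a)$ can ever eliminate $a$ afterwards, and $a$ is removed only by global elimination from the lowest layer $\ell'$ in which $a^{\star}$ provably survives, which requires $2^{\ell'}$ on the order of $C(b)$ rather than $\max(C(a),C(a^{\star}))$. The number of pulls of $a$ then scales with $K\cdot C(b)/\Delta(a)^2$ times log factors, not with $K\cdot\max(C(a),C(a^{\star}))/\Delta(a)^2$.

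Your proposed fix --- charge the interference to $b$'s own budget and peel arms off in order of corruption level --- does not balance the books: the extra regret inflicted on $a$ is of order $K\,C(b)\log(\nicefrac{KT}{\delta})/\Delta(a)$, while the summand the theorem makes available for arm $b$ is of order $K\,C(b)\log^2(\nicefrac{KT}{\delta})/\Delta(b)$, and a far-from-optimal, cheaply-corrupted attacker ($\Delta(b)$ a constant, $\Delta(a)$ small) makes the former polynomially larger than the latter. So either a genuinely new ingredient is needed (for instance, showing the adversary must pay extra to place $\widetilde\mu^{\ell}(b)$ inside the eviction window at precision $\widetilde\mu^{\ell}(a)-\widetilde\mu^{\ell}(a^{\star})$ without also eliminating $a$, or restating the summand for $a$ in terms of the corruption of whichever arm evicts $a^{\star}$), or the per-arm form of the bound itself needs re-examination. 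To be fair, this hole is present in the paper's own one-sentence justification as well --- the survival of $a^{\star}$ in a layer is a statement about all arms, not about the pair $\{a,a^{\star}\}$ --- and your write-up deserves credit for locating it precisely; but as submitted it is an identified obstacle with an unworkable patch, not a proof.
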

The proof follows the same arguments  since it only compares each arm $a$ with $a^{\star}$. This result is nice since the contribution of each arm to the regret is a function only of its own gap and the corruption injected to it and the one injected to arm $a^{\star}$. The latter dependence on the corruption on the optimal arm is essential since the main attack we presented to the classical arguments only corrupts arm $a^{\star}$ -- the lower bound of the previous section also only adds corruption to $a^{\star}$.

\textbf{Dependence on the gap.} 
In Section \ref{sec:upper_bound}, all our guarantees have an inverse dependence on the gap $\Delta(a)$ of all arms $a$. Note that such a guarantee is completely meaningless for arms with a very small gap; for instance, if there exist two optimal arms then there is an arm $a\neq a^{\star}$ with $\Delta(a)=0$ which makes the presented bound infinite and therefore vacuous. As we hinted there though, this inverse dependence can be improved for arms with small $\Delta(a)\leq \nicefrac{1}{\sqrt{T}}$. Our proofs generally relied on setting an upper bound on the number of times that a suboptimal arm is played and thereby providing an upper bound on the regret they cause. %\rplcomment{It might be an overkill, but maybe you can write the full expression of both regret and pseudo-regret with $\min(1/\Delta, \sqrt{T})$.}

For arms with $\Delta(a)\leq \nicefrac{1}{\sqrt{T}}$, an alternative analysis is to say that, even if they are erroneously selected every single time, we can upper bound the loss in performance they cause. For pseudo-regret, the performance loss if they were selected every single time is $\Delta(a)\cdot T\leq \sqrt{T}\leq \nicefrac{\log T}{\Delta(a)}$. For actual regret, one needs to also take into consideration the variance but, even if they are selected every single time, a Hoeffding bound shows that their total reward is with high probability at most $\sqrt{T}$ lower than its expectation. As a result, the inverse dependence on $\Delta(a)$ in our bound can be replaced by $\min(\Delta(a)\cdot T,\nicefrac{1}{\Delta(a)})$ for pseudo-regret and $\min(\sqrt{T},\nicefrac{1}{\Delta(a)})$ for actual regret.

Moreover, the careful reader may have noticed that in Theorem \ref{thm:known_corruption}, the dependence $\frac{C}{\Delta(a)}$ can be replaced by a sole dependence on $C$ without the gap. However, this does not extend to the subsequent theorems since the dependence on $C$ there does not come from the upper bound on the corruption experienced (this is at most $\log T$ due to subsampling). Instead, the dependence on $C$ comes from projecting the correct layer (smallest layer robust to corruption) to the previous layers via the number of times it will take to eliminate any suboptimal arm.

\textbf{Uncorrupted objective.} In applications such as spam, the corruptions should not be counted as part of the rewards. Our algorithm provides the same guarantee in the case of uncorrupted rewards (the difference between the performances in the two objectives is at most $C$). One can also observe that the linear dependence on $C$ is still necessary: consider $2$ arms with $\Delta=1$ and an adversary that corrupts the first $C$ steps making them look identical. The learner has no better option than randomly selecting between the two which gives him a regret of $\nicefrac{C}{2}$ under the uncorrupted objective. We note that, in this setting, the linear dependence is necessary unconditionally of the performance of the algorithm in the stochastic setting.

\textbf{Towards best of \emph{all} worlds.} In the previous section, we showed that a logarithmic dependence in the stochastic setting comes at the expense of linear dependence on $C$ in the $C$-corrupted setting if we focus on actual regret. A very interesting direction is to achieve such an improvement with either a higher power on the logarithm in the stochastic setting or aiming for pseudo-regret instead. 

In fact, we can combine our algorithm with the SAPO algorithm of Auer and Chiang \cite{auer16} and achieve a bicriteria guarantee for pseudo-regret. For an $a<\nhalf$ specified by the algorithm, we achieve our guarantee if the corruption is $C\leq T^a$ and at most $T^{\nhalf+a}$ otherwise; notice that the case $a=0$ corresponds to the best of both worlds. This is done via running the SAPO algorithm at the level $a\log(T)$ with probability $T^{-a}$ instead of having higher layers. The SAPO algorithm guarantees that the pseudo-regret caused by any particular arm is at most logarithmic if the instance is stochastic and at most $\sqrt{T}$ if it is adversarial via a beautiful analysis that keeps negative regret of time intervals that have performed well to avoid testing eliminated arms too often. In our setting, if the corruption level is less than $T^a$, the instance behaves as stochastic causing at most logarithmic regret. Else the instance is corrupted and we can extrapolate the regret in this layer to the whole algorithm as arms that are eliminated in this layer are also eliminated before via global eliminations. Since the regret there is at most $\sqrt{T}$ and this is multiplied by $T^a$, this implies a bound of $T^{\nhalf+a}$ on pseudo-regret.

\textbf{Acknowledgements} The authors would like to thank Sid Banerjee whose lecture notes on stochastic bandits proved very helpful, Andr\'es Munoz Medina, Karthik Sridharan, and \'Eva Tardos for useful discussions, Manish Raghavan for suggestions on the writeup, and the anonymous reviewers for the valuable feedback they provided that improved the presentation of the paper.
\bibliographystyle{alpha}
\bibliography{bib1}

\appendix

\section{Supplementary material on Section \ref{ssec:known}}
\label{app:sec_known}
% !TEX root = main.tex
In this section we provide the proof of Theorem \ref{thm:known_corruption}. Note that in the lemma statements the width is defined as in the theorem: $\wid(a) = \sqrt{\frac{\log\prn*{\nicefrac{2KT}{\delta}}}{n(a)}}+\frac{C}{n(a)}$ for any arm $a\neq a^{\star}$.

\textbf{Lemma \ref{lem:optimal_survives} (restated) }
With probability at least $1-\delta$, arm $a^{\star}$ never becomes eliminated.
\begin{proof} 
The crux of the proof lies in establishing that, with high probability, the upper bound of the confidence interval of $a^{\star}$ never becomes lower than the lower bound of the confidence interval of any other arm $a$ and therefore $a^{\star}$ does not become eliminated.

More formally, let
$\widetilde{\mu}_{\s}(a)$ and $\widetilde{\mu}(a)$ be the empirical mean after $n(a)$ samples of the stochastic part of the rewards and the empirical mean after $n(a)$ samples of the corrupted rewards respectively. Recall that $\mu(a)$ is the mean of arm $a$. By Hoeffding inequality, for any arm $a$, with probability at least $1-\delta'$: 
\begin{equation}\label{eq:main_hoeffding_known}\abs{\widetilde{\mu}_{\s}(a)-\mu(a)}\leq \sqrt{\frac{\log\prn*{\nicefrac{2}{\delta'}}}{n(a)}}.\end{equation}
We set $\delta'=\nicefrac{\delta}{KT}$ to establish that this holds for all arms and all time steps (after arm $a$ has been played $n(a)$ times). As a result, for any arm $a$ and any time: $\widetilde{\mu}_{\s}\prn*{a}\leq \mu(a)+\sqrt{\frac{\log(\nicefrac{2KT}{\delta})}{n(a)}}$ and $\widetilde{\mu}_{\s}\prn*{a^{\star}}\geq \mu(a^{\star})-\sqrt{\frac{\log(\nicefrac{2KT}{\delta})}{n(a^{\star})}}$. 

Comparing now the actual (corrupted) empirical means, they can be altered by at most absolute corruption $C$. Hence $\widetilde{\mu}(a)\leq \widetilde{\mu}_{\s}(a)+\frac{C}{n(a)}$ and $\widetilde{\mu}(a^{\star})\geq \widetilde{\mu}_{\s}(a^{\star})-\frac{C}{n(a^{\star})}$. 

Combining the above inequalities with the fact that the actual mean of $a^{\star}$ is higher than the one of $a$, i.e. $\mu(a^{\star})\geq \mu(a)$, we establish that $\widetilde{\mu}\prn*{a}-\widetilde{\mu}\prn*{a^{\star}}\leq \wid(a)+\wid(a^{\star})$ and therefore arm $a^{\star}$ is not eliminated. Since this holds for all times and arms, the lemma follows.
\end{proof}

\textbf{Lemma \ref{lem:bound_suboptimal_plays} (restated)}
With probability at least $1-\delta$, all arms $a\neq a^{\star}$ become eliminated after $N(a)=\frac{36\cdot\log\prn*{\nicefrac{2KT}{\delta}}+6C}{\Delta(a)^2}$ plays. 
\begin{proof}
The proof stems from the following observations. By Lemma \ref{lem:optimal_survives}, arm $a^{\star}$ is with high probability never eliminated. After $N(a)$ rounds, with high probability, the lower confidence interval of arm $a^{\star}$ is above the upper confidence interval of arm $a$. This comes from the fact that, after $N(a)$ plays of arm $a$ (and also of arm $a^{\star}$ since it is not eliminated), the empirical stochastic mean of $a^{\star}$ is, with high probability, at most $\nicefrac{\Delta(a)}{6}$ below its actual mean and similarly the empirical stochastic mean of arm $a$ is at most $\nicefrac{\Delta(a)}{6}$ above its actual mean. Since the corruptions are upper bounded by $C$, they can only contribute to a decrease in the average empirical (corrupted) means by at most $\nicefrac{\Delta(a)}{6}$ which is not enough to circumvent the gap $\Delta(a)$.

More formally, let $\widetilde{\mu}_{\s}(a)$ and $\widetilde{\mu}(a)$ denote the empirical means of the stochastic part of the rewards and the corrupted rewards respectively after $N(a)$ plays of arm $a$. By the same Hoeffding inequality as in the proof of the previous lemma, with probability at least $1-\delta$, it holds that $\abs{\widetilde{\mu}_{\s}(a)-\mu(a)}\leq \sqrt{\frac{\log\prn*{\nicefrac{2KT}{\delta}}}{N(a)}}$. Therefore, with the same probability, after $\frac{36\cdot\log(\nicefrac{2KT}{\delta})}{\Delta(a)^2}$ plays for both arm $a$ and $a^{\star}$: $\widetilde{\mu}_{\s}(a^{\star})-\mu(a^{\star})\leq \frac{\Delta(a)}{6}$ and $\mu(a)-\widetilde{\mu}_{\s}(a)\leq \frac{\Delta(a)}{6}$. 

The absolute corruption is at most $C$ therefore
$\widetilde{\mu}(a^{\star})\geq\widetilde{\mu}_{\s}(a^{\star})-\frac{C}{N(a)}$ and
$\widetilde{\mu}(a)\leq\widetilde{\mu}_{\s}(a)+\frac{C}{N(a)}$. By the choice of $N(a)$, we have $\frac{C}{N(a)}\leq \frac{\Delta(a)}{6}$. Combining with the above argument, this also implies that the widths are upper bounded by 
$\wid(a)\leq \frac{\Delta(a)}{3}$ and $\wid(a^{\star})\leq \frac{\Delta(a)}{3}$.

Combining the above with the fact that the actual mean of $a^{\star}$ is $\Delta(a)$ higher than the one of $a$, i.e. $\mu(a^{\star})-\mu(a)=\Delta(a)$, we establish \begin{align*}\widetilde{\mu}(a^{\star})-\widetilde{\mu}(a)-\wid(a)-\wid(a^{\star})&\geq 
\widetilde{\mu}_{\s}(a^{\star})-\widetilde{\mu}_{\s}(a)-2\cdot\frac{C}{N(a)}-\wid(a)-\wid(a^{\star})\\
&> \mu(a^{\star})-\mu(a)-2\cdot \frac{\Delta(a)}{6}-\frac{\Delta(a)}{3}-\frac{\Delta(a)}{3}>0
\end{align*}
As a result arm $a$ becomes eliminated after $N(a)$ plays if it is not already eliminated before.
\end{proof}

\textbf{Theorem \ref{thm:known_corruption} (restated) } 
If $C$ is a valid upper bound for the total corruption then arm elimination with $\wid(a) = \sqrt{\frac{\log\prn*{\nicefrac{2KT}{\delta}}}{n(a)}}+\frac{C}{n(a)}$ has regret $\bigO\prn*{\sum_{a \neq a^\star}\prn*{\frac{\log\prn*{\nicefrac{KT}{\delta}}+C}{\Delta(a)}}}$ with probability $1-\delta$.
\begin{proof}
The proof follows the classical stochastic bandit argument of measuring the regret caused by each arm $a\neq a^{\star}$ as a function of its gap $\Delta(a)$ and the number of times $N(a)$ it is played as established by Lemma \ref{lem:bound_suboptimal_plays}.

For simplicity of presentation, we first provide the pseudo-regret guarantee. Pseudo-regret compares the expected performance of the algorithm to the expected performance one would have had, had they selected $a^{\star}$ throughout the whole time horizon. The expected performance when one uses $a^{\star}$ is $\mu(a^{\star})$. The loss compared to that every time $a\neq a^{\star}$ is used instead is equal to its gap $\Delta(a)$. As a result, the expected contribution to pseudo-regret from suboptimal arm $a\neq a^{\star}$ is equal to $N(a)\cdot \Delta(a)$. Lemma \ref{lem:bound_suboptimal_plays} establishes that with probability $1-\delta$ any suboptimal arm $a$ is played at most $N(a)=\frac{36\log(\nicefrac{2KT}{\delta})+6C}{\Delta(a)^2}$ times. Each play of the suboptimal arm causes pseudo-regret of $\Delta(a)$. Multiplying the times by the expected regret per time the guarantee (which equals to the gap) and setting the failure probability $\delta$ to be some inverse polynomial of the time horizon $T$ to ensure that the expected regret due to the bad event is at most a constant leads to the pseudo-regret guarantee.

To turn the above into a high-probability guarantee, we need to show that the regret incurred during the steps that we pull arm $a$ is not significantly higher than the expectation (therefore bounding the resulting variance). By the Hoeffding inequality of Lemma \ref{lem:optimal_survives}, the empirical cumulative reward of arm $a$ is, with high probability, at most $\sqrt{N(a)\log(\nicefrac{2KT}{\delta})}$ less than its expectation. The same holds for arm $a^{\star}$ for these steps (its realized performance is at most this much more than its expectation). The probability that these statements do not hold for some arm or some time is at most $\delta$.

Regarding arms $a\neq a^{\star}$, the $\sqrt{N(a)\log(\nicefrac{2KT}{\delta})}$ term can be upper bounded by $\bigO\prn*{N(a)\Delta(a)}$ by the definition of $N(a)$:
\begin{align*}
\sqrt{N(a)\log\prn*{\nicefrac{2KT}{\delta}}}\leq N(a)\cdot \sqrt{\frac{\log(\nicefrac{2KT}{\delta})}{N(a)}}\leq N(a)\cdot  \Delta(a)\sqrt{\frac{\log\prn*{\nicefrac{2KT}{\delta}}}{36\log\prn*{\nicefrac{2KT}{\delta}}+6C}}\leq N(a)\Delta(a)
\end{align*}
Regarding arm $a^{\star}$, let $a'$ be the arm with the smallest gap. By Lemma \ref{lem:optimal_survives}, $a^{\star}$ never gets eliminated but it is not necessarily the ex post optimal arm. In fact some other arm with $\Delta(a)\leq \sqrt{\nicefrac{1}{T}}$ may be the ex post optimal arm (arms with higher gap are with high probability not the ex post optimal arm by an analogous argument as in Lemma \ref{lem:bound_suboptimal_plays}. However, by the same argument as above arm $a^{\star}$ is with high probability at most $N(a')\cdot\Delta(a')$ below its expectation and the ex post optimal arm is at most this much above its expectation. This gives a bound of $N(a')\Delta(a')$ that is caused by the case where $a^{\star}$ is not the ex post optimal arm.

Therefore the actual regret from times that arm $a$ is played is at most $2N(a)\Delta(a)$ where the one term comes from the expectation and the other from the aforementioned bounds on the variance. The corruption can increase any cumulative reward by at most $C$ which is already existing in the regret bound. Replacing $N(a)$ by Lemma \ref{lem:bound_suboptimal_plays}, we obtain the high-probability guarantee. Note that the failure probabilities of the two lemmas are coupled as they correspond to the same bad events.
\end{proof}

\section{Supplementary material on Section \ref{ssec:robust_known}}
\label{app:sec_robust_known}
% !TEX root = main.tex

In this section, we provide the proof of Theorem \ref{thm:stoch_known corruption}. To handle the corruption, we bound with high probability the total corruption experienced by the slow active arm elimination instance $\S$ (Lemma~\ref{lem:actual_corruption_high_prob}). To deal with an adaptive adversary, we need a martingale concentration inequality; specifically we apply a Bernstein-style inequality introduced in \cite{BeygelzimerLLRS11} (Lemma \ref{lem:improved_martingale_inequality}).

\begin{lemma}[Lemma 1 in \cite{BeygelzimerLLRS11}]\label{lem:improved_martingale_inequality}
Let $X_1,\dots,X_T$ be a sequence of real-valued random numbers. Assume, for all $t$, that $X_t\leq R$ and that $\mathbb{E}[X_t| X_1,\dots,X_{t-1}]=0$. Also let 
$$
V=\sum_{t=1}^T \mathbb{E}[X_{t}^2|X_1,\dots,X_{t-1}].
$$
Then, for any $\delta>0$:
$$
\mathbb{P}\brk*{\sum_{t=1}^T X_t > R\ln(1/\delta)+\frac{e-2}{R}\cdot V}\leq \delta
$$
\end{lemma}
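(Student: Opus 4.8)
The plan is to prove this by the standard exponential-supermartingale (Freedman/Bernstein) recipe, with the normalization $\lambda = 1/R$ built in from the start. Write $\mathcal{F}_t = \sigma(X_1,\dots,X_t)$ for the natural filtration and abbreviate $\sigma_t^2 = \mathbb{E}[X_t^2 \mid \mathcal{F}_{t-1}]$, so that $V = \sum_{t=1}^T \sigma_t^2$. The heart of the argument is an elementary scalar inequality that supplies the constant $e-2$: for every real $x \le 1$ one has $e^x \le 1 + x + (e-2)x^2$. This is the step I would verify first; it holds with equality at $x=0$ and at $x=1$, and a short sign analysis of $f(x) = e^x - 1 - x - (e-2)x^2$ on $(-\infty, 1]$ confirms $f \le 0$ there. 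Since $X_t \le R$, the choice $\lambda = 1/R$ guarantees $\lambda X_t \le 1$, so the scalar inequality applies to $\lambda X_t$.

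Next I would pass to conditional expectations. Applying the scalar bound to $\lambda X_t$ and using the hypothesis $\mathbb{E}[X_t \mid \mathcal{F}_{t-1}] = 0$ gives
$$\mathbb{E}\left[e^{\lambda X_t} \mid \mathcal{F}_{t-1}\right] \le 1 + (e-2)\lambda^2 \sigma_t^2 \le \exp\left((e-2)\lambda^2 \sigma_t^2\right),$$
where the last step is $1 + y \le e^y$. Define $Z_t = \exp\left(\lambda X_t - (e-2)\lambda^2 \sigma_t^2\right)$; since $\sigma_t^2$ is $\mathcal{F}_{t-1}$-measurable, the display yields $\mathbb{E}[Z_t \mid \mathcal{F}_{t-1}] \le 1$. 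Hence the process $M_t = \prod_{s \le t} Z_s$ (with $M_0 = 1$) is a nonnegative supermartingale satisfying $\mathbb{E}[M_t \mid \mathcal{F}_{t-1}] \le M_{t-1}$, and telescoping the conditional expectations gives $\mathbb{E}[M_T] \le \mathbb{E}[M_0] = 1$.

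Finally I would unwind the definitions. With $\lambda = 1/R$ we have $\lambda R = 1$ and $(e-2)\lambda^2 = (e-2)/R^2$, so after multiplying through by $\lambda$ the target event $\{\sum_t X_t > R\ln(1/\delta) + \tfrac{e-2}{R}V\}$ becomes exactly $\{\lambda \sum_t X_t - (e-2)\lambda^2 V > \ln(1/\delta)\}$, i.e. $\{M_T > 1/\delta\}$. Markov's inequality then closes the argument:
$$\mathbb{P}\left[M_T > \tfrac{1}{\delta}\right] \le \delta\, \mathbb{E}[M_T] \le \delta.$$
The main (and essentially only) obstacle is the scalar inequality $e^x \le 1 + x + (e-2)x^2$ on $(-\infty,1]$: its tightness at $x=1$ is precisely what forces the constant $e-2$ and dictates the choice $\lambda = 1/R$. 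Everything downstream — the conditional moment bound, the supermartingale property, and the Markov step — is routine once that inequality and the normalization are in hand.
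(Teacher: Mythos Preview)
Your proof is correct: this is the standard Freedman-type exponential supermartingale argument with the scalar bound $e^x \le 1 + x + (e-2)x^2$ on $(-\infty,1]$ and the choice $\lambda = 1/R$, and every step you outline goes through as written. Note, however, that the paper does not prove this lemma at all; it is imported verbatim as Lemma~1 of \cite{BeygelzimerLLRS11} and used as a black box in the proof of Lemma~\ref{lem:actual_corruption_high_prob}. So there is no ``paper's own proof'' to compare against, but what you have supplied is exactly the argument one finds in the cited source.
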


\textbf{Lemma \ref{lem:actual_corruption_high_prob} (restated)}
In Algorithm \ref{alg:known}, the slow active arm elimination algorithm $\S$ observes, with probability at least $1-\delta$, corruption of at most $\ln(\nicefrac{1}{\delta})+3$ during its exploration phase (when picked with probability $\nicefrac{1}{C}$).
\begin{proof}
The first observation is that the expected corruption encountered by algorithm $\S$ is at most a constant (total corruption of $C$ encountered with probability $\nicefrac{1}{C}$). The rest of the proof focuses on bounding the variance of this random variable (actual corruption encountered by the layer). Crucially, since we want to allow the adversary to be adaptive, we should not assume independence across rounds but only conditional independence (conditioned on the history) and this is why some more involved concentration inequality is necessary. Therefore we create a martingale sequence (actual corruption minus expected corruption) and apply a Bernstein-style concentration inequality.

Let $Z_a^t$ be the corruption that is observed by the exploration phase of the algorithm if arm $a$ is selected. For every round $t$, if adversary selects corruption $C_a^t$ then $Z_a^t$ is therefore a random variable equal to $C_a^t$ with probability $\nicefrac{1}{C}$ and $0$ otherwise. Given that the adversary is adaptive and may select the corruptions based on the realizations of the previous rounds, we need to use an appropriate concentration inequality. We use a Bernstein-style inequality, introduced in \cite{BeygelzimerLLRS11}  (Lemma \ref{lem:improved_martingale_inequality}). Initially we resolve the randomness conditioning on $\ell=\S$ (the slow algorithm is selected). Since active arm elimination is deterministic, conditioned on selecting algorithm $\S$, the selected arm is deterministic. Let $a({\S},t)$ be the arm that would be selected if $\ell=\S$ (which happens with probability $\nicefrac{1}{C}$). The martingale sequence is now $$X_t=Z_{a(\S,t)}^t-\mathbb{E}\brk*{Z_{a(\S,t)}^t\mid \mathcal{H}(1:{t-1})}$$ where $\mathcal{H}(1:t)$ corresponds to the history up to round $t$. Note that
\begin{align*}
\mathbb{E}\brk*{X_t^2|X_1,\dots,X_{t-1}} &= \frac{1}{C}\prn*{C_{a(\S,t)}-\frac{C_{a(\S,t)}}{C}}^2+\frac{C-1}{C}\prn*{\frac{C_{a(\S,t)}}{C}}^2 \\
&= \frac{\prn*{C_{a(\S,t)}}^2}{C}\prn*{\frac{C-1}{C}}^2+\frac{C-1}{C}\prn*{\frac{C_{a(\S,t)}}{C}}^2\leq 2\cdot\frac{C_{a(\S,t)}}{C}.
\end{align*}
The last inequality holds as $C_{a(\S,t)}^t\in[0,1]$ and $C_{a(\S,t)}\leq C$ by the definition of $C$.

Therefore, summing over all the rounds, $$V=\sum_t \mathbb{E}\brk*{X_t^2|X_1,\dots,X_{t-1}}\leq \sum_t 2\frac{C_{a(\S,t)}}{C}\leq \frac{2}{C}\cdot \prn*{\sum_t \max_a C_a^t}\leq 2.$$ 

A trivial upper bound of $\abs{X_t}$ is $R=1$, since the rewards are in $[0,1]$. Applying Lemma \ref{lem:improved_martingale_inequality}, we show that, w.p. $1-\delta$:
\begin{align*}
\sum_t X_t \leq \ln(\nicefrac{1}{\delta})+2(e-2)\leq \ln(\nicefrac{1}{\delta})+2
\end{align*}
The lemma then follows by adding the expected corruption of $\mathbb{E}\brk*{\sum_t Z_{a(\S,t)}\mid \mathcal{H}(1:t-1)}\leq 1$ and therefore obtaining the bound of the statement on the corruption experienced:
\begin{align*}
\sum_t Z_{a(\S,t)}^t = \sum_t X_t +\mathbb{E}\brk*{\sum_t Z_{a(\S,t)}\mid \mathcal{H}(1:t-1)}\leq \ln(\nicefrac{1}{\delta})+3.
\end{align*}
\end{proof}

\textbf{Theorem \ref{thm:stoch_known corruption} (restated)}
Algorithm \ref{alg:known} run with widths
$\wid^{\S}(a)=\sqrt{\frac{\log(\nicefrac{8KT}{\delta})}{n^{\S}(a)}}+\frac{2\log(\nicefrac{8KT}{\delta})}{n^{\S}(a)}$ and  $\wid^{\F}(a)=\sqrt{\frac{\log(\nicefrac{8KT}{\delta})}{n^{\F}(a)}}$ has $\bigO\prn*{\sum_{a\neq a^{\star}}\frac{\log\prn*{\nicefrac{KT}{\delta}}}{\Delta(a)}}$  for the stochastic case and $\bigO\prn*{K\cdot C\cdot\sum_{a\neq a^{\star}}\frac{\prn*{\log\prn*{\nicefrac{KT}{\delta}}}^2}{\Delta(a)}}$ for the $C$-corrupted case with probability at least $1-\delta$. 

\begin{proof}
For the stochastic case, the bound follows via standard stochastic bandit arguments (similarly to the proof of Theorem \ref{thm:known_corruption} with $C=0$) as for each of the two active arm elimination algorithms we incur, with probability $1-\delta_{\ell,\s}$ regret $\bigO\prn*{\sum_{a\neq a^{\star}}\frac{\prn*{\log\prn*{\nicefrac{2KT}{\delta_{\ell,\s}}}}}{\Delta(a)}}$ where $\delta_{\ell,\s}=\nicefrac{\delta}{4}$ is the failure probability of inequality \eqref{eq:main_hoeffding_known}, which governs the results in Lemmas \ref{lem:optimal_survives} and \ref{lem:bound_suboptimal_plays}, for each of $\ell\in\crl*{\F,\S}$.

The most interesting case is the $C$-corrupted setting. Let $\delta_{\S,C}=\nicefrac{\delta}{4}$ be the failure probability in Lemma \ref{lem:actual_corruption_high_prob}. By Lemma \ref{lem:actual_corruption_high_prob}, with probability at least $1-\delta_{\S,C}$, the actual corruption experienced by the slow active arm elimination algorithm is at most $\ln(1/\delta)+3$ which is less than $2\log(\nicefrac{2KT}{\delta})$ for non-trivial values of $K$ and $T$. Therefore we can apply the analysis of Theorem \ref{thm:known_corruption} with corruption level at least $2\log(\nicefrac{2KT}{\delta_{\S,C}})$ and get a handle on the actual regret coming from the slow active arm elimination algorithm. 

What is left is to bound the regret coming from the fast active arm elimination algorithm. Towards this goal, we bound the number of times that a suboptimal arm is played in the fast active arm elimination by the expected time that it remains active at the slow active arm elimination. By Lemma \ref{lem:bound_suboptimal_plays}, arm $a$ is played in the slow active arm elimination, with probability at least $1-\nicefrac{3\delta}{4}$, at most
$$N_{\S}(a)=\frac{16\log(\nicefrac{2KT}{\delta_{\S,\s}})+2\log(\nicefrac{2KT}{\delta_{\S,C}})}{\Delta(a)^2}\leq \frac{18\log(\nicefrac{8KT}{\delta})}{\Delta(a)^2}.$$

Having a bound on the number of plays of the arm in the slow active arm elimination instance, we use this to bound the number of plays in the fast active arm elimination instance. In expectation, this is at most $K\cdot C\cdot N_{\S}(a)$ times as every move in the slow active arm elimination occurs with probability $\nicefrac{1}{C}$ and, at least $\nicefrac{1}{K}$ of these moves are plays of $a$ while it is still active. Since every time arm $a$ is played it incurs pseudo-regret $\Delta(a)$, this provides the pseudo-regret guarantee. 

To obtain a high probability guarantee, let $\delta_m=\nicefrac{\delta}{4KT}$ and observe that with probability at least $1-\delta_m$, we make one move at the slow arm elimination algorithm every $\bigO\prn*{C\log\prn*{\nicefrac{1}{\delta_m}}}$ moves at the fast arm elimination algorithm. This can be seen by thinking the following process: One tosses coins with bias $p=\nicefrac{1}{C}$ until she observes heads for the first time (heads is the $p$-biased event). After $M$ tosses of the coins the probability that no heads have arrived is at most $\prn*{1-p}^M$. To ensure that this is less than $\delta_m$, we need to wait $M\geq\frac{\log\prn*{\nicefrac{1}{\delta_m}}}{\log(\frac{1}{1-p})}$, which is achieved by $M=\frac{\log(\nicefrac{1}{\delta_m})}{\nicefrac{p}{(1-p)}}$.

By union bound on the failure probabilities for each of those draws, we get that with failure probability $\delta_e=K\cdot N_{\S}(a)\cdot \delta_m\leq \nicefrac{\delta}{4}$ (since $N_{\S}(a)\leq T$ as it is at most the time horizon), arm $a$ gets inactivated in $\F$ after
$$N_{\F}(a)=K\cdot N_{\S}(a)\cdot C\cdot\log(\nicefrac{1}{\delta_e})=\frac{18\cdot C\cdot K\cdot  \prn*{\log(\nicefrac{8KT}{\delta})}^2}{\Delta(a)^2}.$$

The last part is to prove that the regret experienced throughout those rounds is not too large. This follows by the two applications of Hoeffding inequality as before for arms $a$ and $a^{\star}$, analogously to Theorem \ref{thm:known_corruption}. Combining the above arguments the theorem follows. The total failure probability of the guarantee is $\delta_{\S,\s}+\delta_{\S,C}+\delta_{\F,\s}+\delta_e\leq \delta$.
\end{proof}

\section{Supplementary material on Section \ref{ssec:robust_agnostic}}
\label{app:sec_robust_agnostic}
% !TEX root = main.tex

\textbf{Theorem \ref{thm:agnostic} (restated)}
Algorithm \ref{alg:agnostic} which is agnostic to the coruption level $C$, when run with widths
$\wid^{\ell}(a)=\sqrt{\frac{\log(\nicefrac{4KT\cdot \log T}{\delta})}{n^{\ell}(a)}}+\frac{\log(\nicefrac{4KT\cdot\log T}{\delta})}{n^{\ell}(a)}$ has regret: $$\bigO\prn*{\sum_{a\neq a^*}\frac{K\cdot C\log\prn*{\nicefrac{KT}{\delta}}+\log\prn*{T}}{\Delta(a)}\cdot\log\prn*{\nicefrac{KT}{\delta}}}.$$
\begin{proof}
The proof follows similar arguments to the proof of Theorem \ref{thm:stoch_known corruption}. Specifically, for the layers that are above the corruption level $C$, by using the standard arguments described in Theorem \ref{thm:known_corruption}, we establish a $\frac{\log(\nicefrac{2KT}{\delta_{\ell,\s}})}{\Delta(a)}$ bound on the regret caused by any suboptimal arm $a$,
with failure probability $\delta_{\ell,\s}=\nicefrac{\delta}{2\log T}$. Since there are $\log(T)$ such levels, the regret coming from these layers is upper bounded by the second term of the theorem with failure probability$\nicefrac{\delta}{2}$. 

For the layers $\ell$ that are not tolerant to the corruption, i.e. $2^{\ell}>C$, we apply the same argument as in the proof of Theorem \ref{thm:stoch_known corruption} and bound their regret via the number of plays they are played by the minimum layer that is robust to corruption $\ell^{\star}=\argmin_{\ell}\brk*{2^{\ell}>C}$. Similarly as in the proof of the theorem we upper bound the number of plays $N_{\ell^{\star}}(a)$ of each suboptimal arm $a$ at this layer (by exactly the same arguments), then bound the number of plays in the suboptimal layer via the same coin toss process as in that proof and, last bound the regret they incur during this part. Since we do not know the amount of corruption in advance (and this amount is adaptively selected), we also need to take a union bound on the number of layers so that the guarantee on $N_{\ell^{\star}}(a)$ holds for all layers simulataneously if they end up being correct; we therefore repeat the arguments in Theorem \ref{thm:stoch_known corruption} with $\delta_{\ell,C}=\nicefrac{\delta}{2\log T}$ and $\delta_m \leq \nicefrac{\delta}{2KT \log T}$.

Last, we note that, since we used powers of $2$ to increase the corruption among layers, the fact that we did not apply the arguments of Theorem \ref{thm:stoch_known corruption} with the exact $C$ but instead used a $C'$ such that $C<C'<2C$ causes just an extra constant factor on the regret.
\end{proof}

\section{Supplementary material on Section \ref{sec:lower_bound}}
\label{app:sec_lower_bound}
\textbf{Theorem \ref{thm:lower_bound_1} (restated)} Consider a multi-armed bandits algorithm that has the property
that for any stochastic input in the two arm setting, it has pseudo-regret
bounded by $c \log(T) / \Delta$,
where $\Delta = \abs{\mu_1 - \mu_2}$. For any $\epsilon, \epsilon' \in (0,1)$,
there is a corruption level $C$ with $T^\epsilon < C < T^{\epsilon'}$ and a
$C$-corrupted instance such that with constant probability the regret is
$\Omega(C)$. 
\begin{proof} The proof follows a sequence of steps.

{\bf Step 1: Analyze behavior in the stochastic case.}
Fix a constant $\Delta \leq 1/6$ and observe how the algorithm behaves
for the stochastic input that has Bernoulli arms of means
$(\mu_1, \mu_2) = (\half - \Delta, \half)$.
Since in that setting the expected regret is the same as $\Delta \cdot \E [T_1]$
where $T_1$ is the number of pulls of arm $1$, it follows that  
$\E [T_1] \leq c \log(T) /
\Delta^2$.

{\bf Step 2: find a large interval that is hit with at most constant
  probability.} We divide the space between $T^\epsilon$ and $T^{\epsilon'}$ into
  $\bigO( \log(T) / (\epsilon' - \epsilon) )$ intervals  $I_i = [3^{i-1}
  T^\epsilon,
  3^{i} T^\epsilon)$ such that  size of each interval is
twice the size of all the previous intervals combined.
For each interval $i$, let $T_{1,i}$ be the number of times
that arm $1$ is pulled in the $i$-th interval. Then, there exists an interval $I_i = [C, 3C)$   such that $\E[T_{1,i}] \leq \tilde{c} := \bigO(1 /
  [ (\epsilon' - \epsilon) \Delta^2 ])$.

{\bf Step 3: create an adversary that forces a lot of regret in interval $i$.}
  The adversary is quite simple: for the first $C$ steps, the arms are Bernoulli
  with means $(\half - \Delta, \half)$ and for the remaining timesteps, the arms are
  Bernoulli with means $(\half + \Delta, \half)$.

We use $\E$ and $\P$ to refer to the probability law whe inputs are drawn
with respect to $(\half - \Delta, \half)$ in all timesteps and $\E'$ and $\P'$ to
refer to the probability law when the input is according to $(\half - \Delta,
\half)$ in the first $K$ steps and according to $(\half + \Delta, \half)$ onwards.

{\bf Step 4: With constant probability arm $1$ is pulled a constant number of
times in $I_i$ under both $\P$ and $\P'$.} Under the probability law $\P$,
this follows directly from Markov's inequality: $\tilde{c} = \E[T_{1,i}] \geq
2 \tilde{c} \P[ T_{1,i} \geq 2 \tilde{c}]$, so:  $\P[ T_{1,i} \leq 2 \tilde{c}]
\geq \half$.

Denote by $A$ the event that $T_{1,i} \leq 2 \tilde{c}$. We want to argue that
$\P'[A]$ is also constant. In order to do that, let $Z = (Z_1, Z_2, \hdots,
Z_{2\tilde{c}})$ be a vector storing in $Z_s$ the reward of arm $1$ in the
$s$-th time it is pulled in interval $I_i$. Notice that in both the stochastic
and corrupted scenarios if the learner observes the same values of $Z$ she acts the exact same way. Therefore, if we condition on $Z$, the probability that
she ends up pulling arm $1$ for more than $2\tilde{c}$ times is exactly the
same. In other words:
$$\P[A \vert Z] = \P'[A \vert Z]$$
Therefore:
  $$\P'[A] = \sum_z \P'[Z=z] \cdot \P[A \vert Z=z] \geq
  \left(\frac{\half-\Delta}{\half+\Delta}\right)^{2 \tilde{c}} \P'[Z=z] \P[A
  \vert Z=z] \geq \half \cdot   \left(\frac{\half-\Delta}{\half+\Delta}\right)^{2 \tilde{c}}$$
which is a constant.

{\bf Step 5: concentration bounds for the regret incurred in each interval.} We now define an event $B$ that occurs with probability $1-o(1)$ that captures all the concentration bounds we need for the proof. First, we require arm $1$ to be the optimal arm. Let $r^t(i)$ be the reward of arm $i$ in time step $t$. We know that $\E'[\sum_t r^t(1)] = \half T + \Delta(T-2C) $ and $\E'[\sum_t r^t(2)] = \half T$. Since all the rewards are independent we can use
  the Hoeffding bound to bound the probability $\P'(\sum_t \Delta_t < 0)$ where
  $\Delta_t = r^t(2) - r^t(1)$:
  $$\P'\left( \sum_t r^t(1) < \sum_t r^t(2) \right) \leq \P'\left( \left\vert
  \sum_t \Delta_t - \E'\Delta_t \right\vert > \Delta(T-2C) \right) \leq 2
  \exp\left( -\frac{T \Delta^2}{2} \left( 1-2\frac{C}{T}\right) \right) = o(1)$$

  \newcommand{\ho}[1]{[#1)}

  Now we establish some concentration on the regret that the learner
  achieves with respect to arm $1$ in the intervals $\ho{1,C}$, $\ho{C,3C}$ and
  $\ho{3C,T}$. We note that if the learner pulls arm $1$, she does not incur  any
  regret. If she pulls arm $2$, she incurs regret $\Delta_t = r^t(2) - r^t(1)$
  which can be positive or negative. To compute regret with respect to arm $1$
  in each of those intervals, we sample $\Delta_t$ every time that the arm $2$ is pulled.

  {\bf Step 5a: interval $\ho{1,C}$}. In this interval, $\E'\Delta_t = -\Delta$, so
  $\sum_{t \in [1,C)} \Delta_t =
  -C\Delta$. If $Y$ is the number of times arm $1$ is pulled, then the regret is
  given by $\sum_{s=1}^Y \Delta_s$ where in the previous expression we abuse
  notation and mean by $\Delta_s$ the regret in the $s$-th time the arm $2$ is
  pulled instead of the regret in the $t$-th period. Therefore:
  $$\P'\left( \sum_{s=1}^Y \Delta_s < -1.1 \Delta C \right) \leq
  \P'\left( \min_{t\leq C}\sum_{s=1}^t \Delta_s < -1.1 \Delta C \right) \leq
  \sum_{t=1}^C \P'\left( \sum_{s=1}^t \Delta_s < -1.1 \Delta C
  \right) $$
  We then use the Hoeffding bound in the last expression and get:
  $$\sum_{t=1}^C 2 \exp\left( -\half t \left( \frac{1.1 \Delta C - \Delta t}{t}
  \right)^2 \right) \leq C \cdot \exp\left( - \half (0.1 \Delta C)^2 \right) =
  o(1)$$

  {\bf Step 5b: interval $\ho{C,3C}$}. In this interval, $\E' \Delta_t = \Delta$,
  so using the same bound as before, we get
  $$\P' \left( \sum_{t\in\ho{C,3C}} \Delta_t < 1.9\Delta C \right) \leq
  2\exp\left(
  - C (0.1 \Delta)^2 \right) = o(1)$$

  {\bf Step 5c: interval $[3C,T]$} In this interval, pulling arm $2$ has again
  positive expected regret. We use the same technique used in 5a to argue that
  she cannot obtain large negative regret with high probability:
  Let $Y$ be the number of times arm $2$ is pulled in that interval and again we
  abuse notation and let $\Delta_s$ be the difference in rewards in the $s$-th
  time the arm is pulled. Then:
  $$\P'\left( \sum_{s=1}^Y \Delta_t < - \log T\right) \leq \P'\left(\min_{t \leq
  T} \sum_{s=1}^t \Delta_t < -\log T\right) \leq \sum_{t=1}^T
  \P' \left(\sum_{s=1}^t \Delta_t < -\log T\right)  $$
  For $t=1..\log T$ this probability is zero, since $\Delta_t \geq -1$. Now, for
  larger $T$, we can use the standard Chernoff bound:
  $$ \sum_{t=\log T}^T    \P' \left(\sum_{s=1}^t \Delta_t < -\log T\right)  \leq
    2 T \exp \left( -\half \log^2(T) \Delta \right) = o(1)
  $$

  Now all the concentration bounds have been established we define the event $B$
  to be the event where all those concentration bounds hold. More precisely, $B$
  is the event where the following four things happen: (a) empirically arm $1$
  is better than arm $2$; (b) in interval $\ho{1,C}$, the regret of the learner
  is at least $-1.1\Delta C$; (c) in interval $\ho{C,3C}$, the difference
  between the total rewards of both arms is at least $1.9\Delta C$; and (d) the
  regret of the learer in interval $[3C, T]$ is at least $-\log T$. By the
  discussion in step 5, we know that $\P'(B) = 1-o(1)$.

{\bf Step 6: putting it all together.} Since $\P'(A) = \Omega(1)$ and
$\P'(B) = 1-o(1)$, then by the union bound, $\P'(A \text{ and } B) \geq \P'(A) -
o(1) = \Omega(1)$. Now, we need to argue that in the constant probability event $(A \text{
  and } B)$, the regret of the learner is at least $\Omega(C)$.

We simply sum the regret of the learner in each of the intervals. For intervals
$\ho{1,C}$ and $[3C,T]$ we can use the bounds computed in steps 5a andn 5c
directly. For interval $\ho{C,3C}$, we note that conditioned on $A$, the learner
probes arm $1$ a constant number of times, so his total regret differs from the
regret by pulling arm $2$ in all iterations by at most a constant, therefore the
total regret can be bounded by:
$$-1.1\Delta C + (1.9 \Delta C - 4 \tilde{c}) - \log(T) = \Omega(\Delta C)$$
\end{proof}

We can adapt the argument to provide a bound on the expected positive regret
$\E[\regret^+]$ where $x^+ = \max\{x,0\}$. Note that the high probability bounds provided also
imply a bound on the expected positive regret.\\

\textbf{Theorem \ref{thm:lower_bound_2}.} If a multi-armed bandits algorithm that has the property
that for any stochastic input in the two arm setting, it has pseudo-regret
bounded by $c \log^{1+\alpha}(T) / \Delta$ for $\alpha < 1$.
For any $\epsilon, \epsilon' \in (0,1)$,
there is a corruption level $C$ with $T^\epsilon < C < T^{\epsilon'}$ and a
$C$-corrupted instance such that
$\E[\regret^+] = \Omega(T^{\epsilon - \delta})$ for all $\delta > 0$.
\begin{proof}
Modify the proof of Theorem \ref{thm:lower_bound_1} as follows. Define
$\tilde{c} = \bigO(\log^\alpha(T) / [(\epsilon' - \epsilon) \Delta^2])$ and again select
an interval such that $\E[T_{i,1}] \leq \tilde{c}$. Event $A$ is defined in
the same way. By Markov's inequality:
$\P[A] \geq \nhalf$ and
$$\P'[A] \geq \frac{1}{2} \cdot  
\left(\frac{\half-\Delta}{\half+\Delta}\right)^{2\tilde{c}}
= \exp\prn*{-\bigO(\log^{\alpha}(T))} $$
Step 5 remains unchanged and in step 6 note that $\P'(B) \ll \P'(A)$ since $\alpha < 1$,
so $\P'(A \text{ and } B) = \exp\prn*{-\bigO(\log^{\alpha}(T))}$. Therefore, with probability
at least $\exp\prn*{-\bigO(\log^{\alpha}(T))}$ the regret is at least $\Omega(C) = \Omega(T^{\epsilon})$
and therefore, $\E[\regret^+] = \Omega(T^{\epsilon} \cdot \exp\prn*{-\bigO(\log^{\alpha}(T))}) =\Omega(T^{\epsilon-\delta}) $
for all $\delta > 0$.
\end{proof}

\end{document}